\crefname{assumption}{Assumption}{Assumptions}
\crefname{equation}{Eq.}{Eqs.}
\newcommand{\cmark}{\ding{51}}
\newcommand{\xmark}{\ding{55}}
\DeclareMathOperator{\tr}{tr}
\DeclareMathOperator{\sg}{\operatorname{sg}}
\DeclareRobustCommand\onedot{\futurelet\@let@token\@onedot}
\def\@onedot{\ifx\@let@token.\else.\null\fi\xspace}
\def\eg{\emph{e.g}\onedot} 
\def\ie{\emph{i.e}\onedot}
\theoremstyle{plain}
\newtheorem{theorem}{Theorem}[section]
\theoremstyle{definition}
\newtheorem{assumption}[theorem]{Assumption}
\theoremstyle{remark}
\icmltitlerunning{On the Effectiveness of Supervision in Asymmetric Non-Contrastive Learning}
\begin{document}

\twocolumn[
\icmltitle{On the Effectiveness of Supervision in Asymmetric Non-Contrastive Learning}

\icmlsetsymbol{equal}{*}

\begin{icmlauthorlist}
\icmlauthor{Jeongheon Oh}{yonsei}
\icmlauthor{Kibok Lee}{yonsei}
\end{icmlauthorlist}

\icmlaffiliation{yonsei}{Department of Statistics and Data Science, Yonsei University}

\icmlcorrespondingauthor{Kibok Lee}{kibok@yonsei.ac.kr}

\icmlkeywords{Supervised Representation Learning, Supervised Non-Contrastive Learning}

\vskip 0.3in
]

\printAffiliationsAndNotice{}

\begin{abstract}
Supervised contrastive representation learning has been shown to be effective in various transfer learning scenarios.
However, while asymmetric non-contrastive learning (ANCL) often outperforms its contrastive learning counterpart in self-supervised representation learning, the extension of ANCL to supervised scenarios is less explored.
To bridge the gap, we study ANCL for supervised representation learning, coined \textsc{SupSiam} and \textsc{SupBYOL}, leveraging labels in ANCL to achieve better representations.
The proposed supervised ANCL framework improves representation learning while avoiding collapse.
Our analysis reveals that providing supervision to ANCL reduces intra-class variance, and the contribution of supervision should be adjusted to achieve the best performance.
Experiments demonstrate the superiority of supervised ANCL across various datasets and tasks.
The code is available at: \url{https://github.com/JH-Oh-23/Sup-ANCL}.
\vspace{-10pt}
\end{abstract}

\begin{figure*}[t]
\begin{center}
\includegraphics[width=.9\textwidth]{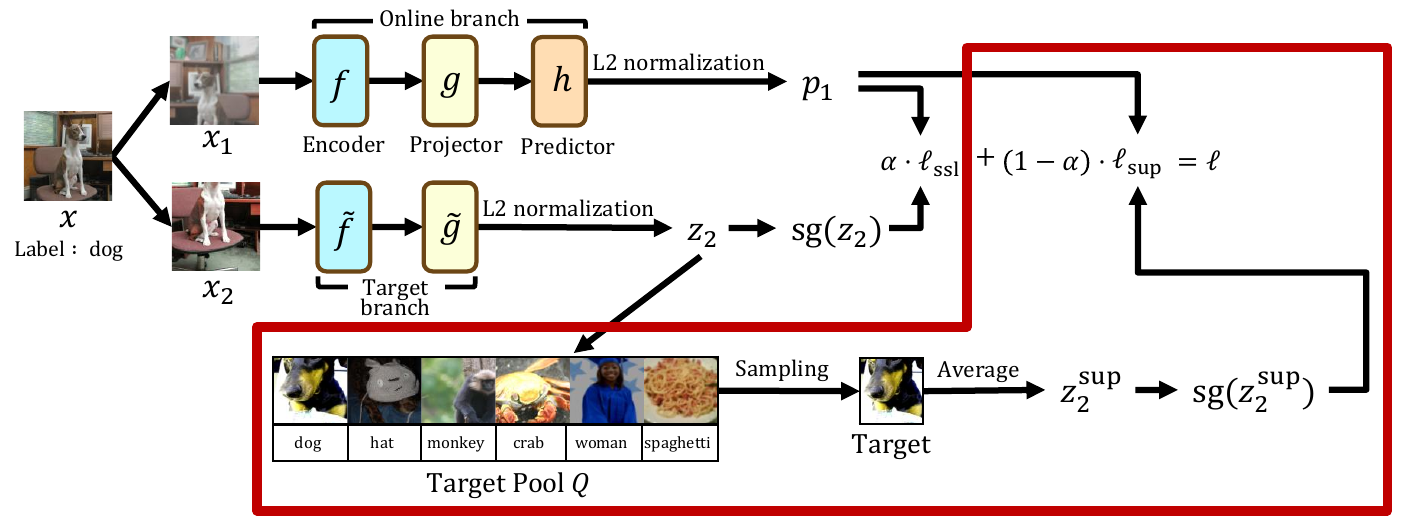}
\vspace{-5pt}
\caption{\textbf{Our proposed supervised ANCL framework}.
The components we added to the standard ANCL are highlighted with a red box.
We manage a target pool to ensure the existence of positive samples sharing the same class label in the form of $z_2^\text{sup}$.
Stop-gradient ($\sg$) applied to $z_2$ and $z_2^\text{sup}$
ensures that the gradients propagate through the online branch with the predictor only.
The target branch without the predictor either shares parameters with the online branch (\textsc{SupSiam}), or exhibits a momentum network (\textsc{SupBYOL}).
} 
\label{fig:framework}
\end{center}
\vspace{-5pt}
\end{figure*}

\section{Introduction}

Self-supervised learning has recently been proven to be an effective paradigm for representation learning~\citep{chen20simple,chen21siamese,he2022masked}.
Among various pretext tasks for self-supervised learning, contrastive learning (CL)~\citep{oord18cpc,chen20simple,he20contrast} first promised outstanding performance, surpassing the transfer learning performance of supervised pretraining~\citep{sharif2014cnn}, which learns representations by attracting positive pairs while repelling negative pairs.
However, CL requires negative samples to ensure good performance, which might not be possible under limited batch sizes.
On the other hand, asymmetric non-contrastive learning (ANCL)~\citep{grill20bootstrap,chen21siamese} has emerged as a promising alternative to CL, which maximizes the similarity between positive pairs without relying on negative samples.
To prevent learned representations from collapsing, ANCL employs an asymmetric structure by placing a predictor after one side of the projector.

A key component in both CL and ANCL is acquisition of positive pairs, which is typically achieved through data augmentation.
Given that datasets for pretraining often include labels, \citet{khosla20supcon} proposed to incorporate supervision into CL by treating samples with the same class label as positive pairs as well.
Supervised CL has demonstrated superior performance across diverse tasks, such as few-shot learning \citep{majumder21supmoco}, long-tail recognition \citep{kang21balance}, continual learning \citep{cha2021co2l}, and natural language processing \citep{gunel21}.

While supervision helps to discover more positive samples, it does not directly help to identify effective negative samples.
Consequently, ANCL has a better potential to benefit from supervision, as it focuses on positive pairs only.
However, in contrast to CL, there are a limited number of studies on leveraging supervision to improve ANCL, despite its strong performance in self-supervised learning.
To bridge this gap, we study the effect of supervision in ANCL by introducing the supervised ANCL framework and investigating its behavior through theoretical and empirical analysis.

To the best of our knowledge, our work is the first to conduct a theoretical analysis of the behavior of representations learned through supervised ANCL.
Our experiments confirm the effectiveness of supervision observed through our theoretical analysis, as well as the superiority of representations learned via supervised ANCL across various datasets and tasks.
Specifically, as illustrated in \cref{fig:framework}, we consider \textsc{SupSiam} and \textsc{SupBYOL}, which are supervised adaptations of the two popular ANCL methods, \textsc{SimSiam} \citep{chen21siamese} and \textsc{BYOL} \citep{grill20bootstrap}, respectively.
Our contributions are summarized as follows:
\begin{itemize}[leftmargin=12pt]

    \item We propose a supervised ANCL framework for representation learning while avoiding collapse, which surpasses the performance of its self-supervised counterpart when supervision is available.

    \item Our analysis demonstrates that incorporating supervision into ANCL reduces the intra-class variance of latent features, and that learning to capture both intra- and inter-class variance is crucial for representation learning.

    \item Our experiments validate our analysis and demonstrate the superiority of representations learned via supervised ANCL across various datasets and tasks.

\end{itemize}

\section{Related Works}

\textbf{Supervised CL}.
Although \textsc{SupCon} \citep {khosla20supcon} demonstrated remarkable linear probing performance on pretrained datasets, its efficacy on other downstream datasets is comparable or inferior to that of self-supervised methods. 
In response, subsequent works have been underway to better utilize supervision to enhance representation learning.
\citet{wei21semantic} proposed to improve CL by taking top-$K$ positive neighbors into account and assigning soft labels to positive samples based on similarity, such that it better reflects task-specific semantic features and task-agnostic appearance features.
\citet{wang23opera} argued that naively incorporating supervised signals might conflict with the self-supervised signals.
To address this issue, \citet{wang23opera} proposed to impose hierarchical supervisions with an additional projector.
\citet{graf21dissect} provided both theoretical and empirical evidence demonstrating that the \textsc{SupCon} loss is minimized when each class collapses to a single point, resulting in poor generalization of learned representations.
\citet{chen22perfectly} found that the \textsc{SupCon} loss is invariant to class-fixing permutations, indicating that the loss remains unchanged when data points within the same class are arbitrarily permuted in representation space, which also leads to poor generalization of learned representations.
\citet{chen22perfectly} proposed incorporating a weighted class-conditional InfoNCE loss to avoid class collapse, and constraining the encoder, adding a class-conditional autoencoder, and using data augmentation to break permutation invariance.
\citet{xue23learnt} argued that features learned through supervised CL are prone to class collapse, whereas those learned through self-supervised CL suffer from feature suppression, \ie, easy and class-irrelevant features suppress to learn harder and class-relevant features.
They claimed that balancing the losses of supervised and self-supervised CL is crucial for improving the quality of learned representations.
Notably, these efforts have concentrated on CL, motivating us to investigate the effect of supervision in ANCL.
Although several studies on supervised ANCL exist, such as \citet{asadi22supbyol} and \citet{maser23supsiam}, their contributions lack a theoretical understanding of the effect of supervision and/or are limited to specific domains.

\textbf{Theoretical Analysis on ANCL}.
While the initial ANCL works \citep{grill20bootstrap,chen21siamese} have demonstrated impressive performance, the learning dynamics that enable effective representation learning without negative pairs while avoiding collapse to trivial solutions remain unclear.
\citet{tian21} elucidated the dynamics of ANCL through the spectral decomposition of the correlation matrix.
Specifically, assuming the predictor is linear, they proved that the eigenspace of the learned predictor aligns with the eigenspace of the correlation matrix of the latent features.
\citet{liu22gap} empirically observed that, as learning progresses, both the linear predictor and the correlation matrix of latent features converge to a (scaled) identity matrix in ANCL.
Based on this observation, they argued that the asymmetric architecture in ANCL implicitly encourages feature decorrelation, achieving a similar effect to symmetric non-CL methods that explicitly decorrelate features such as Barlow Twins \citep{zbontar21barlow} and VICReg \citep{bardes22vicreg}.
\citet{zhuo23rdm} suggested that the predictor in ANCL operates as a low-pass filter, thereby decreasing the rank of the predictor outputs.
They argued that the rank difference between the correlation matrix of the projector outputs and that of the predictor outputs mitigates dimensional collapse by gradually increasing the effective rank of them as training progresses.
Inspired by the prior works on self-supervised ANCL, we analyze supervised ANCL under a similar framework with additional assumptions.
On the other hand, \citet{halvagal23implicit} found that prior works overlook the L2 normalization of projector/predictor outputs, which is a common practice in ANCL, before computing the loss.
They investigated the learning dynamics by incorporating the L2 normalization and compared it with the case without the L2 normalization.
Our work also considers the L2 normalization;
however, instead of normalizing the features directly, we consider it as a constraint and employ a Lagrangian formulation.

\section{Method}

In this section, we first review the problem setting of self-supervised ANCL.
Then, we introduce supervised ANCL.
The overall framework is illustrated in \cref{fig:framework}.

\subsection{Preliminary: Self-Supervised ANCL}

Let $f$, $g$, and $h$ be the encoder, projector, and predictor of the online branch, respectively, and $\tilde{f}$ and $\tilde{g}$ be the encoder and projector of the target branch, respectively.
For a data point $x$, let $z = (g \circ f)(x)$ and $p = (h \circ g \circ f)(x)$ be the output of the projector and predictor, respectively.
In self-supervised ANCL, two views $x_1$ and $x_2$ are generated from the data $x$ through augmentation, and the model learns to minimize the distance between these views encoded at different levels:
it compares the prediction of the first view $p_1 = (h \circ g \circ f)(x_1)$ with the projection of the second view $z_2 = (\tilde{g} \circ \tilde{f})(x_2)$.
It has been observed that the asymmetric architecture introduced by the predictor $h$ helps prevent representation collapse by predicting the latent feature of the second view $z_2$ from that of the first view $z_1$, \ie, $z_2 \simeq p_1 = h(z_1)$ \citep{chen21siamese}.
The self-supervised ANCL loss $\ell_{\text{ssl}}$ is expressed as:
\begin{equation} \label{eq:ssl_loss}
    \ell_{\text{ssl}}(p_1,z_2) = \left\|p_1 - \sg\left(z_2\right) \right\|_2^2,
\end{equation}
where $\sg$ is the stop-gradient operation and $p_1$ and $z_2$ are L2-normalized.
The inclusion of stop-gradient is also crucial for preventing collapsing, making it an essential component of the loss formulation \citep{chen21siamese}.

The target branch can either share parameters with the online branch \citep{chen21siamese}, or exhibit a momentum network \citep{grill20bootstrap}.
When a momentum network is employed, its parameters follow the exponential moving average (EMA) update rule:
$\theta_{\tilde{g} \circ \tilde{f}} \leftarrow m \cdot \theta_{\tilde{g} \circ \tilde{f}} + (1-m) \cdot \theta_{g \circ f}$,
where $m$ is the momentum, $\theta_{g \circ f}$ is the set of learnable parameters in $f$ and $g$.
The parameters of the target model $\theta_{\tilde{g} \circ \tilde{f}}$ are initialized to those of the online model $\theta_{g \circ f}$.

\subsection{Supervised ANCL}

We propose to enhance supervised ANCL by integrating supervision through an additional loss function:
for an anchor $x_1$ and its supervised target $x_2^\text{sup}$ sharing the same label $y$, the loss minimizes the distance between $p_1 = (h \circ g \circ f)(x_1)$ and $z_2^\text{sup} = (\tilde{g} \circ \tilde{f})(x_2^\text{sup})$.
However, the additional loss may not always be effective, because the current batch might not contain any samples sharing the same label as the anchor, particularly when the batch size is small.

To address this issue, we introduce a target pool to ensure the presence of targets sharing the same label as each anchor in the batch, regardless of batch size.
Similar to the memory bank utilized in prior works~\citep{wu2018unsupervised}, the target pool $Q$ is a queue storing targets $z_2$ along with their corresponding labels.
The target pool offers another advantage that positive samples from the target pool help mitigate collapse because they are updated more slowly than those sampled from the batch, as empirically observed in \cref{table:ablation_collapse}.
The proposed target pool is flexible in its design, such that it can be a vanilla queue, a collection of per-class queues ensuring the presence of targets from all labels even when the queue size is small, or a set of learnable class prototypes;
the impact of these design choices is investigated in \cref{table:pool-design}.

Now, we sample the supervised target $z_2^\text{sup}$ sharing the same class as the anchor $x$ from the target pool $Q$.
Specifically, we sample $M$ targets and average them to formulate the supervised ANCL loss $\ell_{\text{sup}}$:
{%
\thinmuskip=1mu 
\medmuskip=2mu plus 2mu minus 4mu 
\thickmuskip=3.5mu plus 5mu 
\begin{equation} \label{eq:sup_loss}
    \ell_{\text{sup}}(p_1,z_2^\text{sup}) = \left\| p_1 - \sg\left(z_2^\text{sup}\right)\right\|_2^2,
    \ z_2^\text{sup} = \frac{1}{M}\sum_{z'_2 \in Q_y} z'_2,
\end{equation}
}%
where $p_1$ and $z'_2$ are L2-normalized and $Q_y \subseteq Q$ is the set of $M$ targets sampled from $Q$ sharing the same label $y$ as $x$.
We sample all positives in $Q$ in experiments, and the effect of $M$ is discussed in \cref{sec:numpos}.
Finally, the total loss is defined by the convex combination of $\ell_{\text{ssl}}$ and $\ell_{\text{sup}}$:
\begin{equation} \label{eq:loss}
    \ell(p_1,z_2,z_2^\text{sup}) = \alpha \cdot \ell_{\text{ssl}}(p_1,z_2) + (1-\alpha)\cdot \ell_{\text{sup}}(p_1,z_2^\text{sup}),
\end{equation}
where $\alpha \in [0,1]$ adjusts the contribution of $\ell_{\text{ssl}}$ and $\ell_{\text{sup}}$, and we symmetrize the loss in experiments following the convention.
We argue that the introduction of $\ell_{\text{sup}}$ reduces intra-class variance and $\alpha$ adjusts the amount of reduction, where details can be found in \cref{sec:role}.

Note that our strategy for incorporating supervision into the loss differs from that of \textsc{SupCon} \citep {khosla20supcon}.
We first average the supervised loss before combining it with the self-supervised loss, whereas \textsc{SupCon} weights all per-sample losses equally, regardless of whether they are self-supervised or supervised.
Since our focus is on analyzing the overall effects of self-supervised and supervised losses rather than per-sample losses, our strategy aligns with the analysis presented in the following section.

\section{Analysis of the Effect of Supervision}
\label{sec:theoretical}

In this section, we analyze the effect of supervision in ANCL. 
We argue that incorporating supervision into ANCL reduces intra-class variance, and that its contribution should be adjusted to achieve better representations.
Detailed mathematical proofs are provided in \cref{sec:proof}.

\subsection{Problem Setup}

For simplicity in our analysis, we adopt several assumptions from
\citet{tian21, zhuo23rdm}:
\begin{assumption} \label{assump:linear}
    The encoder followed by the projector $g \circ f$ and the predictor $h$ are linear:
    $z=(g \circ f)(x)=Wx$ and $p = h(z) = W_p z$,
    where $W_p$ is a symmetric matrix.
\end{assumption}

\begin{assumption} \label{assump:aug}
    The distribution of the data augmentation $P(\widetilde{X}|X)$ has a mean $X$ and a covariance matrix $\sigma_e^2 I$.
\end{assumption}
While previous studies on self-supervised ANCL assume that the distribution of the input data has a zero mean and a scaled identity covariance matrix, class-conditional distributions should be considered when incorporating supervision.
Specifically, we assume the class-conditional and class-prior distributions over $C$ classes as follows:
\begin{assumption} \label{assump:class}
    The class-prior distribution follows the uniform distribution:
    $P(Y=y) = 1/C$.
\end{assumption}

\begin{assumption} \label{assump:cond}
    For an input data $X$ and its class $Y$, the conditional distribution $P(X|Y)$ is characterized by a mean $\mu_y$ and a covariance matrix $\Sigma_y$, where the total mean and total covariance matrix are zero and the identity matrix, respectively:
    $\sum_y \mu_y = 0$ and $S_T = \frac{1}{C}\sum_y \left(\mu_y \mu_y^\top +\Sigma_y \right) = I$.
\end{assumption}

\Cref{assump:class} is made for simplicity of analysis;
our analysis holds without this assumption, albeit the derivation becomes more complex.
\Cref{assump:cond} can be naturally satisfied through data whitening.

\subsection{Supervision Reduces Intra-Class Variance}
\label{sec:math}

For simplicity, assume we sample one target from the pool, \ie, $M=1$.
We first express the loss in \cref{eq:loss} with constraints to ensure the L2 normalization of features:
\vspace{-4pt}
\begin{align}
\begin{aligned}[b]
    \label{eq:loss_contrained}
    \ell &= \alpha \left\| W_p z_1 - z_2 \right\|_2^2 + (1-\alpha) \left\| W_p z_1 - z_2^\text{sup} \right\|_2^2 \\
    &= \left\| W_p z_1 - \left( \alpha \cdot z_2 + (1-\alpha) \cdot z_2^\text{sup} \right) \right\|_2^2 + \text{const} \\
    &\text{ s.t. }
    \left\| z_2 \right\|_2^2 =
    \left\| z_2^\text{sup} \right\|_2^2 =
    \left\| W_p z_1 \right\|_2^2 = 1,
\end{aligned}
\end{align}
where we omit stop-gradient applied to $z_2$ and $z_2^\text{sup}$ for brevity, and the equality in the second line holds due to the linearity of the L2 loss between L2-normalized features with respect to the target~\citep{lee2020mix}.
Hence, this optimization can be interpreted as mapping one view $z_1$ to an interpolated target between another view $z_2$ and the supervised target $z_2^\text{sup}$.
Intuitively, when $\alpha=1$, the model cannot determine the exact augmentation applied to $x_2$ by observing $x_1$, such that it predicts $z_2$ from $z_1$ through low-rank approximation via principal component analysis (PCA)~\cite{richemond2023edge}.
Similarly, when $\alpha=0$, the model cannot infer the exact supervised target $z_2^\text{sup}$ by observing $z_1$;
instead, it predicts $z_2^\text{sup}$ by mapping $z_1$ to the class centroid.
Here, it has been known that least squares with targets independent of each other (ignoring centering, if applied) is equivalent to linear discriminant analysis (LDA)~\cite{lee2015equivalence}, where LDA simultaneously maximizes between-class scatter and minimizes within-class scatter.
Hence, we can hypothesize that incorporating supervision into ANCL reduces intra-class variance, and the degree of reduction is controlled by $\alpha$.

To prove that incorporating supervision into ANCL reduces intra-class variance, we establish the following:
1)~the optimal predictor $W_p^*$ generates features of data with reduced intra-class variance by a factor of $\alpha$, and
2)~the optimal $W_p$ and $W$ share the same eigenspace, thereby $W$ learns to reduce intra-class variance of features.

First, we formulate the Lagrangian function of \cref{eq:loss_contrained} and take the expectation over $x_1$, $x_2$, and $x_2^\text{sup}$:
{%
\thinmuskip=.25mu 
\medmuskip=.4mu plus 2mu minus 4mu
\thickmuskip=.6mu plus 5mu 
\begin{align}
\label{eq:loss_expect}
    \mathcal{L} &= 2
    -2\alpha \cdot \tr\left( W_p^\top \mathbb{E}\left[ z_2 z_1^{\top} \right] \right)
    -2(1-\alpha) \cdot \tr\left( W_p^\top \mathbb{E}\left[ z_2^\text{sup} z_1^\top\right] \right) \nonumber \\
    &\quad + \lambda_1 \left( \tr\left( \mathbb{E}\left[ z_2 z_2^\top \right] \right) -1 \right) + \lambda_2 \left( \tr\left( \mathbb{E}\left[ z_2^\text{sup} z_2^{\text{sup} \top} \right] \right) -1 \right) \nonumber \\
    &\quad + \lambda_3 \left( \tr\left( W_p^\top W_p \mathbb{E}\left[ z_1 z_1^\top \right] \right) -1 \right), 
\end{align}
}%
where $\lambda_1$, $\lambda_2$, and $\lambda_3$ are the Lagrange multipliers.
Note that $x$ and $x^\text{sup}$ are independently sampled from the conditional distribution $P(X|Y=y)$, $x_1$ and $x_2$ are independently sampled from $P(\widetilde{X}|X=x)$, and $x_2^\text{sup}$ is sampled from $P(\widetilde{X}|X=x^\text{sup})$.

\begin{restatable}{proposition}{propositionCov}
\label{proposition:cov}
The covariance matrices of features
$\mathbb{E}\left[ z_1 z_1^\top \right]$,
$\mathbb{E}\left[ z_2 z_1^\top \right]$, and
$\mathbb{E}\left[ z_2^\text{\normalfont sup} z_1^\top \right]$
share the same eigenspace in the data space.
\end{restatable}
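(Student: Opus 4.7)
The plan is to reduce each of the three covariance matrices to the form $W A W^{\top}$ for some data-space matrix $A$, and then show that the three resulting $A$'s mutually commute and thus admit a common orthonormal eigenbasis in $\mathbb{R}^d$.

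First, I would invoke \cref{assump:linear} to write $\mathbb{E}[z_i z_j^{\top}] = W\,\mathbb{E}[x_i x_j^{\top}]\,W^{\top}$, reducing the claim to the structure of the three data-space cross-moments. I would then evaluate each by iterating the generative process $y \to (x, x^{\text{sup}}) \to (x_1, x_2, x_2^{\text{sup}})$ with the tower property. For $\mathbb{E}[x_1 x_1^{\top}]$, the innermost augmentation expectation contributes $\sigma_e^2 I + x x^{\top}$ by \cref{assump:aug}, and the total-covariance identity $S_T = I$ from \cref{assump:cond} then yields $(1+\sigma_e^2)\,I$. For $\mathbb{E}[x_2 x_1^{\top}]$, the conditional independence of $x_1$ and $x_2$ given $x$ (each with conditional mean $x$) leaves only $\mathbb{E}[xx^{\top}] = I$. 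For $\mathbb{E}[x_2^{\text{sup}} x_1^{\top}]$, the conditional independence of $x$ and $x^{\text{sup}}$ given $y$ (each with conditional mean $\mu_y$) collapses the expectation to $\mathbb{E}_y[\mu_y \mu_y^{\top}] = \tfrac{1}{C}\sum_y \mu_y \mu_y^{\top} =: S_B$, the between-class scatter.

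Second, I would observe that the three data-space matrices $(1+\sigma_e^2)\,I$, $I$, and $S_B$ pairwise commute, since the identity commutes with every matrix. They are therefore simultaneously diagonalizable, and any orthonormal eigenbasis of $S_B$ serves as a common eigenbasis in the data space $\mathbb{R}^d$. Since each feature-space covariance has the form $W A W^{\top}$ with the three $A$'s sharing this eigenbasis, all three feature-space covariances admit the same data-space principal directions, which is the desired conclusion.

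I expect the main obstacle to be the careful bookkeeping for the third computation: $x_1$ and $x_2^{\text{sup}}$ are augmentations of two independent draws $x$ and $x^{\text{sup}}$ that share only the class label $y$, so the cross-moment must collapse to the outer product of class means $\mu_y \mu_y^{\top}$ rather than to a full second-moment matrix that would also include the class covariances $\Sigma_y$. Once the between-class scatter $S_B$ has been isolated, commutativity with the identity is immediate, and the common eigenbasis drops out with no further work.
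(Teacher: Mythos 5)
Your proposal is correct and follows essentially the same route as the paper's proof: both use the tower property with the conditional independences to reduce the three cross-moments to $(1+\sigma_e^2)I$, $I$ (i.e., $S_B+S_W=S_T$), and $S_B$ in the data space, and both then note that any eigenbasis of $S_B$ simultaneously diagonalizes all three since the remaining terms are (scaled) identities. Your phrasing via commutativity is just a cosmetic restatement of the paper's eigendecomposition step.
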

\begin{proof}
From \cref{assump:linear,assump:aug,assump:class,assump:cond},
\begin{align}
\begin{aligned}[b]
\label{eq:cov}
    \mathbb{E}\left[ z_1 z_1^{\top} \right] &= W \left( S_B + S_W + S_e \right)W^\top, \\
    \mathbb{E}\left[ z_2 z_1^{\top} \right] &= W \left( S_B + S_W \right)W^\top = W W^\top, \\
    \mathbb{E}\left[ z_2^\text{sup} z_1^\top \right] &= W S_B W^\top,
\end{aligned}
\end{align}
where $S_B = \frac{1}{C} \sum_y \mu_y \mu_y^{\top}$ is the inter-class covariance, $S_W = \frac{1}{C} \sum_y \Sigma_y$ is the intra-class covariance, and $S_e = \sigma^2_e I$ is the variance of the augmentation noise.
Let $S_B = V \Lambda_B V^\top$ be the eigendecomposition, where $V$ is an orthogonal matrix and $\Lambda_B$ is a diagonal matrix of the eigenvalues.
Then, $S_T = S_B + S_W$ and $S_e$ share the same eigenspace with $S_B$, as they are (scaled) identity matrices.

\begin{align}
\begin{aligned}[b]
\label{eq:cov_eig}
    \mathbb{E}\left[ z_1 z_1^{\top} \right] &= W V \left( \Lambda_B + \Lambda_W + \sigma^2_e I \right) V^\top W^\top, \\
    \mathbb{E}\left[ z_2 z_1^{\top} \right] &= W V \left( \Lambda_B + \Lambda_W \right) V^\top W^\top, \\
    \mathbb{E}\left[ z_2^\text{sup} z_1^\top \right] &= W V \Lambda_B V^\top W^\top,
\end{aligned}
\end{align}
where $\Lambda_W = I - \Lambda_B$ is the eigenvalue matrix of $S_W$.
It can be seen that the covariance matrices of features in \cref{eq:cov_eig} share the same eigenspace in the data space.
\end{proof}
\begin{figure*}[t]
\subfigure[$\alpha = 0.0$]{\includegraphics[width=0.195\textwidth, height=4cm]{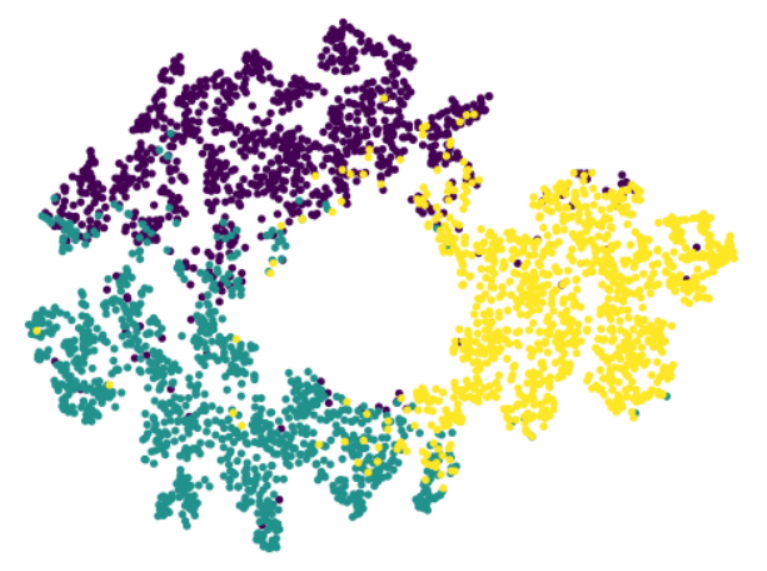}}
\subfigure[$\alpha = 0.2$]{\includegraphics[width=0.195\textwidth, height=4cm]{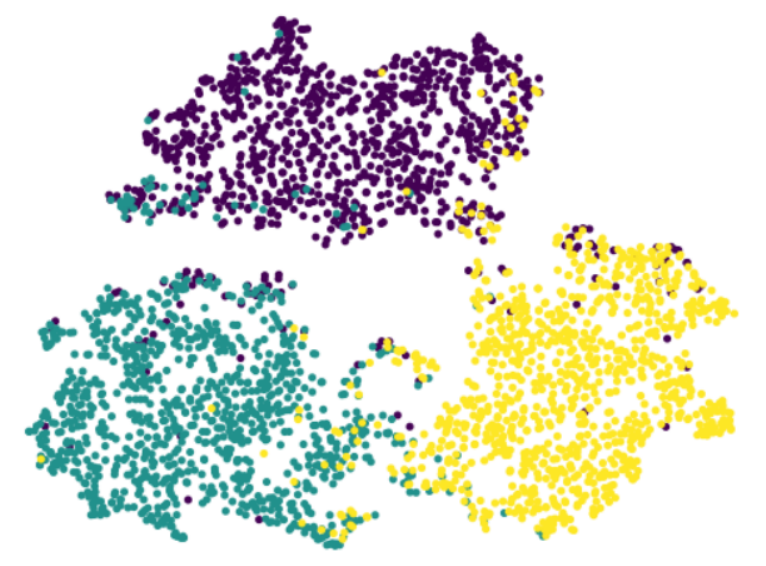}}
\subfigure[$\alpha = 0.5$]{\includegraphics[width=0.195\textwidth, height=4cm]{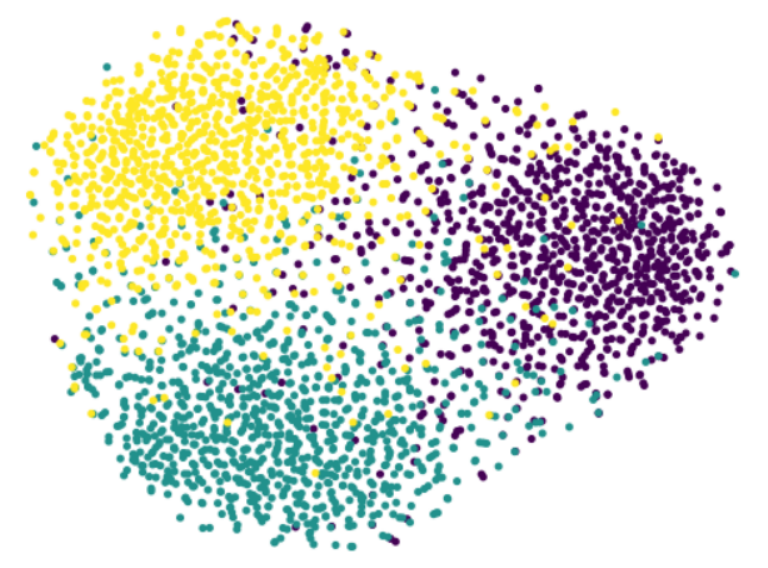}}
\subfigure[$\alpha = 0.8$]{\includegraphics[width=0.195\textwidth, height=4cm]{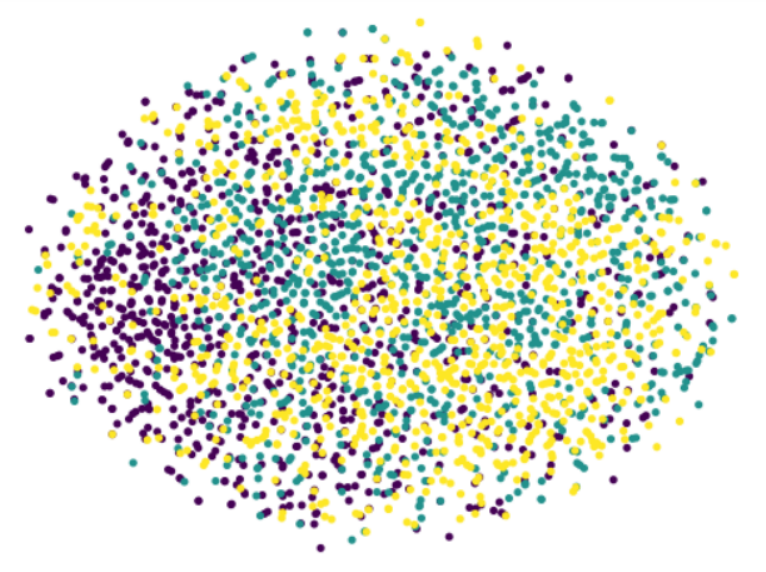}}
\subfigure[$\alpha = 1.0$]{\includegraphics[width=0.195\textwidth, height=4cm]{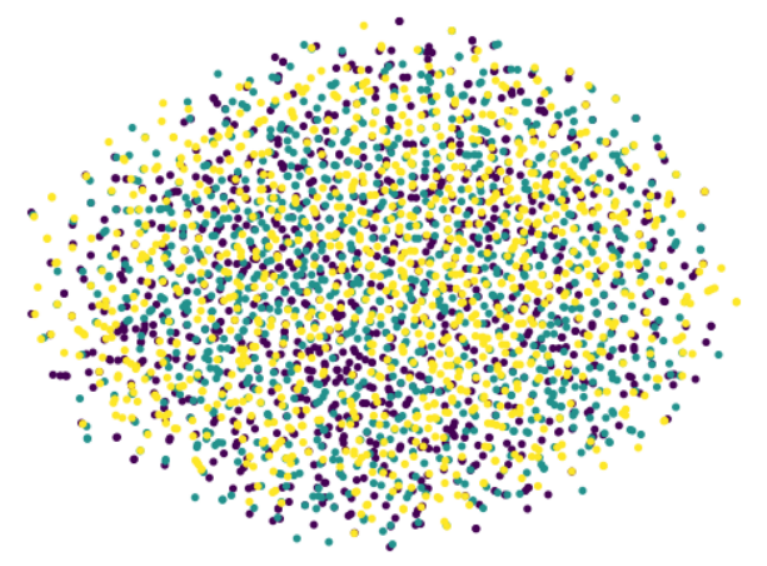}}
\vspace{-2pt}
\caption{t-SNE visualization of \textsc{SupSiam} features with different $\alpha$ on the toy dataset.}
\label{fig:subplot_tsne_toy}
\vspace{-8pt}
\end{figure*}

Then, we apply the expressions in \cref{proposition:cov} to the optimal predictor $W_p^*$ obtained from \cref{eq:loss_expect}:
\begin{restatable}{theorem}{theoremOptimalPred}
\label{thm:optimal_pred}
For an arbitrary $W$, the optimal predictor $W_p^*$ that minimizes the loss in \cref{eq:loss_expect} is given by
{%
\thinmuskip=.5mu
\medmuskip=1mu plus 2mu minus 4mu
\thickmuskip=1.3mu plus 5mu
\begin{equation*}
    W_p^*= \frac{1}{\lambda_3} WV \left( \Lambda_B + \alpha \Lambda_W \right) \left( \Lambda_B + \Lambda_W + \sigma^2_e I \right)^{-1} V^\top W^+,
\end{equation*}
}%
where $W^+$ is the Moore-Penrose inverse~\cite{penrose1955generalized}.
\end{restatable}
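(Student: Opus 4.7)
The plan is to find $W_p^*$ by applying first-order optimality to $\mathcal{L}$ in \cref{eq:loss_expect} with $W$ and the Lagrange multipliers held fixed, and then simplify using the closed-form covariance expressions from \cref{proposition:cov}. Only three summands of $\mathcal{L}$ involve $W_p$; their gradients are $\nabla_{W_p}\tr(W_p^\top A) = A$ for the two cross-covariance terms and $\nabla_{W_p}\tr(W_p^\top W_p B) = 2 W_p B$ with $B = \mathbb{E}[z_1 z_1^\top]$ symmetric for the quadratic term. Setting the total gradient to zero and dividing by $2$ yields the stationarity equation
\begin{equation*}
\lambda_3\, W_p\, \mathbb{E}[z_1 z_1^\top] \;=\; \alpha\, \mathbb{E}[z_2 z_1^\top] + (1-\alpha)\, \mathbb{E}[z_2^{\text{sup}} z_1^\top].
\end{equation*}

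Next I would substitute the three matrices from \cref{eq:cov_eig}. On the right-hand side the coefficients of $\Lambda_B$ combine as $\alpha + (1-\alpha) = 1$ while only the self-supervised term contributes $\alpha\Lambda_W$, giving $WV(\Lambda_B + \alpha \Lambda_W) V^\top W^\top$. The left-hand side becomes $\lambda_3\, W_p \cdot WV(\Lambda_B + \Lambda_W + \sigma_e^2 I) V^\top W^\top$. To isolate $W_p$ I would invert the Gram matrix on the left. The key observation is that \cref{assump:cond} forces $\Lambda_B + \Lambda_W = I$, so the middle factor collapses to $(1+\sigma_e^2) I$ and $\mathbb{E}[z_1 z_1^\top] = (1+\sigma_e^2)\, WW^\top$. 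Assuming $W$ has full row rank, $(WW^\top)^{-1}$ exists and $W^+ = W^\top(WW^\top)^{-1}$; right-multiplying the stationarity equation by $(1+\sigma_e^2)^{-1}(WW^\top)^{-1}$ gives
\begin{equation*}
W_p^* \;=\; \frac{1}{\lambda_3(1+\sigma_e^2)}\, WV(\Lambda_B + \alpha\Lambda_W) V^\top W^+.
\end{equation*}
Reabsorbing the scalar $(1+\sigma_e^2)^{-1}$ as $V(\Lambda_B + \Lambda_W + \sigma_e^2 I)^{-1} V^\top$ inside the expression recovers the form stated in the theorem.

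The main obstacle is the pseudoinverse bookkeeping on the right end of the formula. For a general middle factor $D$ the identity $(WDW^\top)^{-1} = (W^+)^\top D^{-1} W^+$ fails when $W$ is non-square, so the derivation works cleanly only because $D$ collapses to a scalar multiple of the identity under \cref{assump:cond}; this is what justifies rewriting $(1+\sigma_e^2)^{-1}(WW^\top)^{-1}$ using $W^+$ without ambiguity, and I would flag the full-row-rank regularity on $W$ explicitly. A secondary subtlety is the symmetry requirement on $W_p$ in \cref{assump:linear}: the unconstrained minimizer need not be manifestly symmetric, so I would argue that the symmetric-constrained optimum coincides with the symmetric part of the above expression, which satisfies the same stationarity equation in the symmetric-matrix inner product, and note that this is the canonical representative reported in the theorem.
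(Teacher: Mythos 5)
Your derivation is correct and follows essentially the same route as the paper's proof: differentiate the Lagrangian in \cref{eq:loss_expect} with respect to $W_p$, set the gradient to zero, substitute the covariance expressions from \cref{proposition:cov} (using $\mathbb{E}[z_1 z_1^\top] = (1+\sigma_e^2)\,WW^\top$), and solve for $W_p^*$ via the Moore--Penrose inverse before rewriting the scalar $(1+\sigma_e^2)^{-1}$ as $V(\Lambda_B+\Lambda_W+\sigma_e^2 I)^{-1}V^\top$. Your explicit remarks on the full-row-rank condition needed to justify $W^\top(WW^\top)^{-1}=W^+$ and on the symmetry constraint on $W_p$ address points the paper leaves implicit, but they do not alter the argument.
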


From \cref{thm:optimal_pred}, the optimal predictor $W_p^*$ can be interpreted through a sequence of hypothetical transformations:
1)~mapping features to the data space,
2)~eliminating the augmentation noise and reducing the intra-class variance by a factor of $\alpha$, and
3)~mapping back to the feature space.

Next, we show that $W_p^*$ and $W^*$ share the same eigenspace.
\begin{restatable}{theorem}{theoremAlign}
\label{thm:align}
The optimal predictor $W_p^*$ and the optimal model $W^*$ that minimizes the loss in \cref{eq:loss_expect} satisfy
\begin{equation*}
    W_p^{* \top} W_p^* \approx W^*W^{*\top}.
\end{equation*}
\end{restatable}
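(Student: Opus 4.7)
The plan is to combine the first-order optimality condition on $W$ with the explicit formula for $W_p^*$ from \cref{thm:optimal_pred}, and then compare $W_p^{*\top}W_p^* = W_p^{*2}$ (using that $W_p$ is symmetric by \cref{assump:linear}) with $W^*W^{*\top}$ directly in a common eigenbasis. The shared eigenstructure established in \cref{proposition:cov} is what makes such a comparison possible.

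First I would compute $\partial \mathcal{L}/\partial W = 0$ from \cref{eq:loss_expect}, using $W_p = W_p^\top$, symmetry of $S_B$, and \cref{assump:cond} which gives $S_B+S_W=I$ and hence $\mathbb{E}[z_2z_1^\top]=WW^\top$. The resulting KKT condition reduces to a matrix equation of the form
\[
  \bigl[\lambda_3(1+\sigma_e^2)\, W_p^2 + (\lambda_1+\lambda_2)(1+\sigma_e^2)\, I\bigr]\, W \;=\; 2\alpha\, W_p W + 2(1-\alpha)\, W_p W S_B .
\]
Next I would diagonalize in the eigenbasis $V$ of $S_B$ supplied by \cref{proposition:cov} and write the reduced SVD $W^* = U\Sigma V_d^\top$. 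Substituting the closed form of $W_p^*$ from \cref{thm:optimal_pred}, both sides become simultaneously diagonalizable in $V$, and matching entries forces $\Sigma^2$ to be (up to a scalar fixed by the Lagrange multipliers) the leading block of $D := (\Lambda_B+\alpha\Lambda_W)(\Lambda_B+\Lambda_W+\sigma_e^2 I)^{-1}$.

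With this structure in hand, the explicit pseudoinverse identity on the range of $W^*$ gives $W^* V\, D\, V^\top (W^*)^+ = U\Sigma\, D_d\, \Sigma^+ U^\top = U D_d U^\top$, so that
\[
  W_p^{*\top} W_p^* = W_p^{*2} = \frac{1}{\lambda_3^2}\, U D_d^2 U^\top,
\]
while $W^* W^{*\top} = U\Sigma^2 U^\top$. By the matching in the previous step, $\Sigma^2 \propto D_d$, so both sides are scalar multiples of $U D_d U^\top$. The $\approx$ in the statement absorbs the factor $(1+\sigma_e^2)^{-1}$ from the noise term in $D$ (small augmentation noise) together with the overall scale set by $\lambda_1,\lambda_2,\lambda_3$, which can be normalized to match using the three trace constraints.

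The main obstacle is the middle step: the KKT system couples three Lagrange multipliers with both $W$ and $W_p^2$, and showing that it collapses to a clean identification $\Sigma^2 \propto D_d$ requires carefully tracking which eigendirections support a nonzero $\Sigma$ (\emph{i.e.}, which modes are actually selected by the low-rank mapping), and verifying that the cross-term with $S_B$ on the right-hand side is consistent with the diagonal form of $W_p^*$. This is where \cref{proposition:cov}, which guarantees that every relevant covariance commutes in the basis $V$, is essential; without it, the right-hand side of the KKT equation would not be simultaneously diagonalizable and the alignment would hold only approximately in a much weaker sense.
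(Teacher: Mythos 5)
There is a genuine gap, and it is structural rather than a fixable detail. The paper does not prove \cref{thm:align} by solving stationarity conditions: it proves it through the training dynamics. Concretely, it establishes the identity $W_p^\top \frac{\partial \mathcal{L}}{\partial W_p} = \frac{\partial \mathcal{L}}{\partial W} W^\top$ (valid for every $(W,W_p)$, not only at optima), plugs in the gradient-flow update with weight decay $\eta$, obtains $\frac{d}{dt}\bigl(e^{2\eta t} W_p^\top W_p\bigr) = \frac{d}{dt}\bigl(e^{2\eta t} W W^\top\bigr)$, and integrates to get $W_p^\top W_p = W W^\top + e^{-2\eta t} c$; the ``$\approx$'' is exactly the decay of the initialization-dependent constant $c$ as $t \to \infty$. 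Your static KKT route cannot reach this conclusion. First, it is circular: right-multiplying $\frac{\partial \mathcal{L}}{\partial W}=0$ by $W^\top$ is \emph{identically} $W_p^\top$ times $\frac{\partial \mathcal{L}}{\partial W_p}=0$ (this is \cref{eq:dldwp_dldw}), so combining the two stationarity equations yields a tautology and no new relation tying $W_p^{\top} W_p$ to $W W^\top$. Second, the stationarity system is invariant under $W \mapsto cW$ with $W_p$ fixed (both conditions are homogeneous in $W$), which rescales $W W^\top$ by $c^2$ while leaving $W_p^\top W_p$ unchanged; hence no argument based only on first-order conditions can force the claimed matrix equality, and in particular your step ``matching entries forces $\Sigma^2 \propto D_d$'' does not follow — the stationarity conditions do not determine $\Sigma$ at all. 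The alignment, including its scale, is a consequence of weight decay plus the gradient dynamics, which is precisely the ingredient your proposal omits.

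Two smaller but real inaccuracies: your KKT equation contains $(\lambda_1+\lambda_2)$ terms acting on $W$, but these cannot appear because stop-gradient is applied to $z_2$ and $z_2^{\text{sup}}$, so the $\lambda_1,\lambda_2$ constraint terms contribute nothing to $\frac{\partial \mathcal{L}}{\partial W}$ (compare \cref{eq:dldw}); relatedly, with stop-gradient the ``loss'' is not a single scalar objective jointly minimized over $(W,W_p)$, so treating the problem as a standard constrained optimization and invoking KKT is already a mismatch with the setting being analyzed. Also, \cref{proposition:cov} and the eigenbasis $V$ play no role in the paper's proof of this theorem — they are needed for \cref{thm:optimal_pred}, not for the alignment statement, which follows from the differential-equation argument without any diagonalization.
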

Note that \cref{thm:align} holds in self-supervised ANCL as shown in \citet{tian21, liu22gap}, and it remains valid when supervision is incorporated.

Finally, we conclude that $W^*$ learns to reduce intra-class variance, as $W_p^*$ generates features of data with reduced intra-class variance by a factor of $\alpha$ from \cref{thm:optimal_pred}, and
$W^*$ imitates this behavior according to \cref{thm:align}.

\subsection{Effect of Reducing Intra-Class Variance}
\label{sec:role}

In the proposed supervised ANCL loss in \cref{eq:loss}, the coefficient $\alpha$ adjusts the contribution of supervision:
decreasing $\alpha$ results in increasing this contribution, thereby reducing intra-class variance, as proved in \cref{sec:math}.
Ideally, when intra-class variance is too small, all data within each class converge to a single point, leading to class collapse:
data within each class become indistinguishable \citep{papyan15collapse}.
Thus, we argue that balancing the contributions of supervision and self-supervision is crucial to achieve semantically aligned yet well-distributed representations in supervised ANCL, leading to the generalization of learned representations;
intuitively, the ideal semantic latent space should retain intra-class variance to distinguish data instances.

\begin{table}[t]
\caption{\textsc{SupSiam} results with different $\alpha$ on the toy dataset and ImageNet-100 in several metrics:
the self-supervised loss in \cref{eq:ssl_loss},
the supervised loss in \cref{eq:sup_loss},
the intra-class variance,
the relative intra-class variance (\%), and
the accuracy of $k$-NN and linear probing (\%).
For the accuracies, the \textbf{best results} are highlighted in bold and the \underline{second-best results} are underlined.}
\label{table:metric}
\begin{center}
\resizebox{\columnwidth}{!}{%
\begin{tabular}{c|ccccc}
\toprule
$\alpha$ & ($\ell_{\text{ssl}}$, $\ell_{\text{sup}}$) & $\widetilde{S}_W$ & $\widetilde{S}_W / \widetilde{S}_T$ & $k$-NN & Linear \\
\midrule
Toy dataset \\
\cmidrule{1-1}
0.0 & (-0.7115, -0.4212) & 0.338 & 33.94 & 60.51 & 60.78 \\
0.2 & (-0.7423, -0.4140) & 0.363 & 36.36 & \textbf{61.35} & \underline{61.58} \\ 
0.5 & (-0.8116, -0.1654) & 0.799 & 79.97 & \underline{61.18} & \textbf{61.89} \\
0.8 & (-0.8473, -0.0231) & 0.971 & 97.14 & 55.93 & 61.18 \\ 
1.0 & (-0.8519, -0.0024) & 0.997 & 99.70 & 38.02 & 45.91 \\
\cmidrule{1-6}
ImageNet-100 \\
\cmidrule{1-1}
0.0 & (-0.9048, -0.8932) & 0.070 & 7.01 & 80.79 & 85.92  \\
0.2 & (-0.9231, -0.9096) & 0.057 & 5.72 & \underline{82.72} & \underline{86.85}  \\
0.5 & (-0.9321, -0.8823) & 0.108 & 10.79 & \textbf{82.89} & \textbf{87.31}  \\
0.8 & (-0.9349, -0.5118) & 0.515 & 51.58 & 80.19 & 86.65  \\
1.0 & (-0.9290, -0.2341) & 0.743 & 74.53 & 75.23 & 82.15  \\
\bottomrule
\end{tabular}
}
\end{center}
\vspace{-10pt}
\end{table}

To verify our claim, we conduct an experiment on a synthetic toy dataset with three classes, each following a Gaussian distribution by training \textsc{SupSiam} models with varying $\alpha$.
The details of the toy dataset and \textsc{SupSiam} models are described in \cref{sec:toy_setting}.
After training, we compare the self-supervised and supervised training losses ($\ell_{\text{ssl}}$ and $\ell_{\text{sup}}$), the absolute and relative intra-class variance of latent features ($\widetilde{S}_W$ and $\widetilde{S}_W / \widetilde{S}_T$), and the accuracy of $k$-nearest neighbors ($k$-NN) and linear probing (Linear) in \cref{table:metric}.
Here, $\widetilde{S}_W$ and $\widetilde{S}_T$ represent the empirical intra-class variance and the total variance, respectively:
\begin{equation}
    \widetilde{S}_W = \mathbb{E}_{y,z} \left[ \| z - \tilde{\mu}_{y} \|_2^2 \right], \;
    \widetilde{S}_T = \mathbb{E}_z \left[ \| z - \tilde{\mu} \|_2^2 \right],
\end{equation}
where $\tilde{\mu}_y$ is the $y$-th class mean and $\tilde{\mu}$ is the total mean of features, and the expectation is taken over training dataset.

In \cref{table:metric}, $\ell_{\text{ssl}}$ decreases while $\ell_{\text{sup}}$ increases as $\alpha$ increases, which confirms that the contribution of each loss is adjusted as expected.
Additionally, the intra-class variance is proportional to $\alpha$, as proved in \cref{sec:math}.
However, the accuracy of $k$-NN and linear probing exhibits different trends, with the best accuracy achieved when $\alpha$ is between 0 and 1.
This supports our claim that while incorporating supervision into ANCL aids in learning semantically aligned representations, excessively reducing intra-class variance may hinder the generalization of learned representations, resulting in diminishing performance on unseen test data.

\Cref{fig:subplot_tsne_toy} visualizes the feature space via t-SNE \citep{maaten2008visualizing}.
When $\alpha=0.5$, the class distributions are well-separated while retaining intra-class variance.
Decreasing $\alpha$ results in more densely clustered results by skewing the feature space, which might be detrimental to generalization;
\eg, the model is overconfident in its predictions for downstream classification tasks.
Conversely, increasing $\alpha$ leads to mixed class distributions,
impairing classification.

\begin{table}[t]
\caption{Transfer learning results
on toy downstream datasets with different means and varying scale of covariance $\sigma$, with \textsc{SupSiam}-pretraining on the toy dataset.
For each scenario, the \textbf{best results} are in bold and the \underline{second-best results} are underlined.}
\label{table:toy_transfer}
\begin{center}
\resizebox{0.95\columnwidth}{!}{\begin{tabular}{c|ccc|ccc}
\toprule
\multirow{2}{*}{$\alpha$} & \multicolumn{3}{c}{Interpolation} & \multicolumn{3}{|c}{Extrapolation} \\
\cmidrule(lr){2-4} \cmidrule(lr){5-7}
& $\sigma = 0.2$ & $\sigma = 0.5$ & $\sigma = 0.8$ &$\sigma = 0.2$ & $\sigma = 0.5$ & $\sigma = 0.8$ \\ 
\midrule
0.0 & 43.60 & 37.44 & 35.40 & 96.67 & 83.76 & 74.42 \\
0.2 & 44.02 & 37.24 & 35.31 & \underline{97.13} & \underline{84.71} & \underline{75.09} \\
0.5 & \textbf{44.25} & \textbf{37.96} & \underline{35.69} & \textbf{97.60} & \textbf{85.87} & \textbf{76.00} \\
0.8 & \underline{44.24} & \underline{37.65} & \textbf{35.98} & 97.07 & 83.69 & 73.73 \\
1.0 & 40.40 & 36.60 & 35.67 & 75.84 & 59.67 & 53.00 \\
\bottomrule
\end{tabular}}
\end{center}
\vspace{-4pt}
\end{table}

To assess the transferability of learned representations, we conduct transfer learning scenarios in \cref{table:toy_transfer}.
Specifically, we consider three downstream classes, where their means are either interpolated or extrapolated from the pretraining classes, and the scale of the covariance matrix of downstream classes is adjusted by $\sigma$ to control the difficulty of downstream tasks.
As shown in \cref{table:toy_transfer}, supervised ANCL consistently outperforms self-supervised ANCL ($\alpha=1$) across all scenarios, highlighting the effectiveness of incorporating supervision into ANCL.
Moreover, the best performance is achieved when $0 < \alpha < 1$, suggesting that balancing the contributions of supervision and self-supervision is crucial, \ie, excessively reducing intra-class variance is detrimental to representation learning.

Next, to confirm the scalability of our observations to real-world scenarios, we conduct a similar experiment on ImageNet-100 \citep{deng2009imagenet, tian19cmc} by replacing the encoder with ResNet-50 \citep{he2016deep} and the projector and predictor with MLPs, respectively.
As shown in the bottom of \cref{table:metric}, the observations remain mostly consistent;
although both supervised loss and intra-class variance slightly decrease when $\alpha$ increases from 0.0 to 0.2, we conjecture that this is due to the non-linearity of the optimization.
These results further support our claim that balancing the contributions of supervision and self-supervision is crucial for the generalization of representations learned via supervised ANCL.

\begin{table}[t]
\caption{Transfer learning results on fine-grained classification datasets, where the model is \textsc{SupSiam}-pretrained with different $\alpha$ on ImageNet-100.
For each dataset, the \textbf{best results} are in bold and the \underline{second-best results} are underlined.}
\label{table:real_fine_grained}
\begin{center}
\begin{tabular}{l|ccc}
\toprule
$\alpha$ & CUB200 & Dogs & Pets \\
\midrule
0.0 & 41.46 & 61.51 & 80.09 \\
0.2 & 42.07 & \underline{64.28} & \underline{82.27} \\
0.5 & \textbf{43.48} & \textbf{64.65} & \textbf{82.38} \\
0.8 & \underline{42.16} & 62.94 & 81.76 \\
1.0 & 36.10 & 54.57 & 75.13 \\
\bottomrule
\end{tabular}
\end{center}
\vspace{-4pt}
\end{table}

\begin{figure*}[t]
\subfigure[$\alpha = 0.0$]{\includegraphics[width=0.195\textwidth, height=4cm]{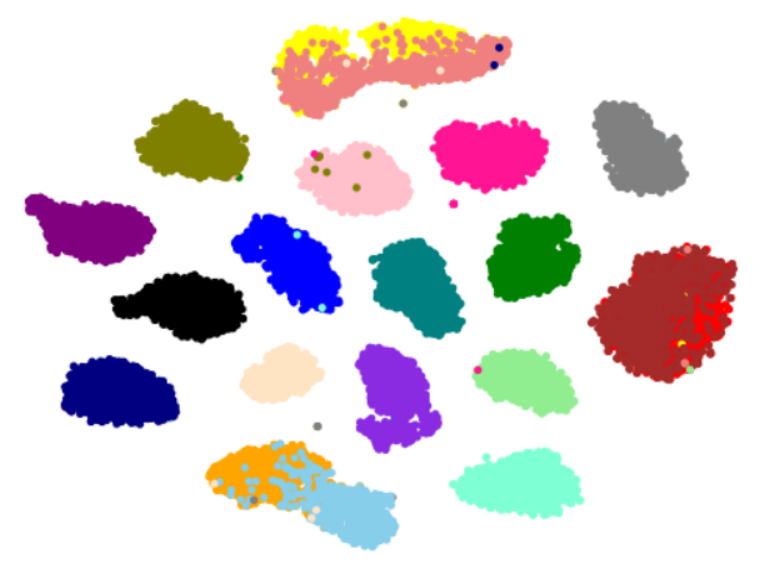}}
\subfigure[$\alpha = 0.2$]{\includegraphics[width=0.195\textwidth, height=4cm]{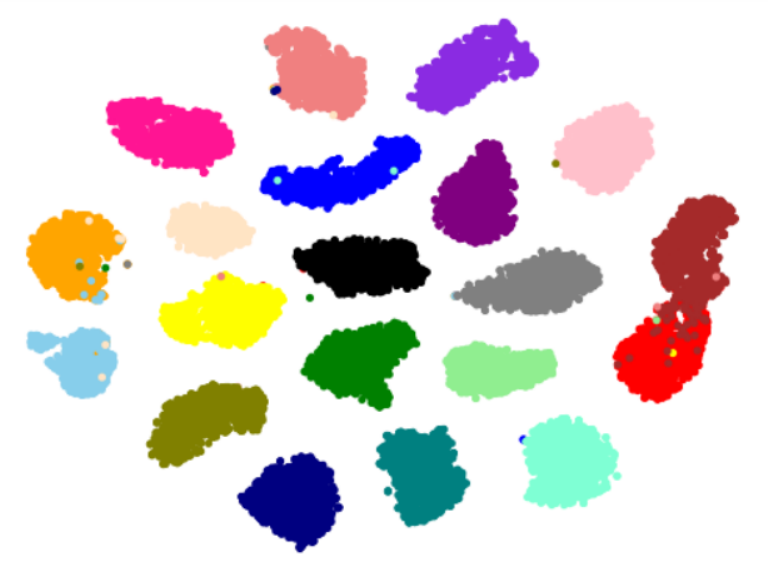}}
\subfigure[$\alpha = 0.5$]{\includegraphics[width=0.195\textwidth, height=4cm]{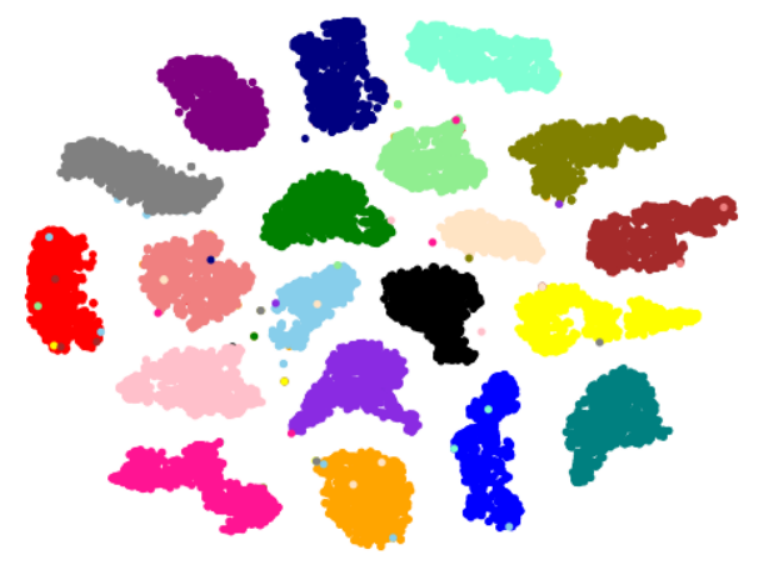}}
\subfigure[$\alpha = 0.8$]{\includegraphics[width=0.195\textwidth, height=4cm]{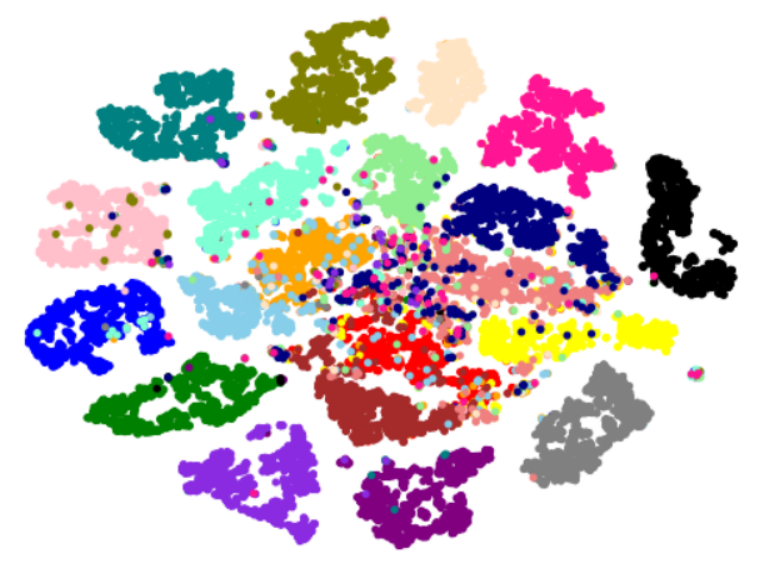}}
\subfigure[$\alpha = 1.0$]{\includegraphics[width=0.195\textwidth, height=4cm]{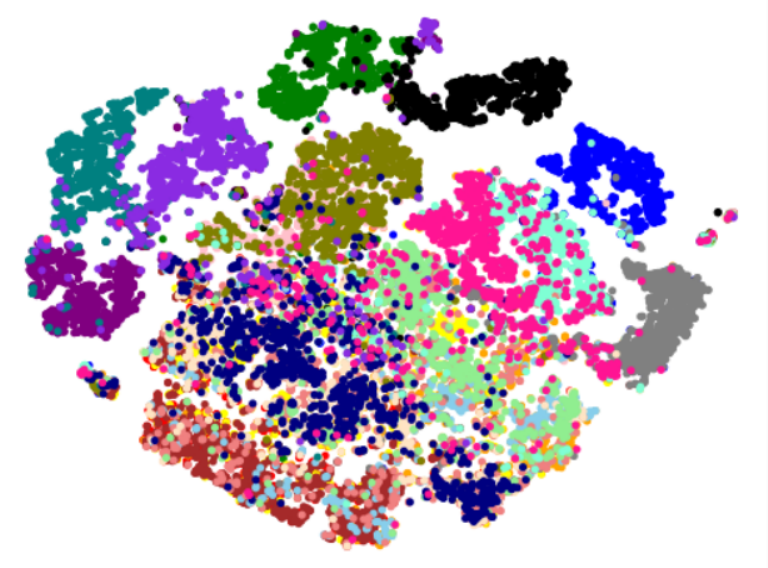}}
\vspace{-2pt}
\caption{
t-SNE visualization of \textsc{SupSiam} features with different $\alpha$ on 15 dog and 5 bird classes from ImageNet-100.
}
\label{fig:subplot_tsne}
\vspace{-8pt}
\end{figure*}

Similar to the toy experiments, we evaluate the transferability of learned representations in real-world scenarios by conducting transfer learning experiments.
Specifically, we apply linear probing to the \textsc{SupSiam}-pretrained models on downstream datasets for fine-grained classification tasks, including CUB-200-2011 \citep{welinder2010cub}, Stanford Dogs \citep{khosla11dog}, and Oxford-IIIT Pets \citep{parkhi12pet}.
As shown in \cref{table:real_fine_grained}, the transfer learning performance exhibits trends similar to those in \cref{table:metric}:
incorporating supervision into ANCL is beneficial, and balancing the contributions of supervision and self-supervision improves the generalization of representations.

To further elucidate the effect of $\alpha$ in real-world scenarios, we present t-SNE visualizations of latent features from 20 classes, consisting of 15 dogs and 5 birds, subsampled from ImageNet-100.
As shown in \cref{fig:subplot_tsne}, classes overlap when no supervision is provided, \ie, when $\alpha=1$, and the latent features form more compact clusters as $\alpha$ decreases.
Notably, some dog classes (\eg, ``Doberman'' and ``Rottweiler'') overlap when $\alpha$ is small, around 0.0 and 0.2, while they are well-separated when $\alpha=0.5$.
This implies that excessively reducing intra-class variance with small $\alpha$ might result in collapsing fine-grained classes, which could be detrimental to downstream tasks.

\section{Experiment}

In this section, we provide experimental results across various datasets and tasks to demonstrate the effectiveness of supervision in ANCL.
We also compare CL methods to confirm that ANCL performance is competitive to CL.
Detailed experimental settings are provided in \cref{sec:pretrain_app}.

\subsection{Pretraining}

We consider two ANCL methods, \textsc{SimSiam} \citep{chen21siamese} and \textsc{BYOL} \citep{grill20bootstrap}, as our baselines, along with their supervised variations, \textsc{SupSiam} and \textsc{SupBYOL}, as our proposed methods.
Additionally, we compare two CL methods, \textsc{SimCLR} \citep{chen20simple} and \textsc{MoCo-v2} \citep{chen20improved}, and their supervised variations, \textsc{SupCon} \citep{khosla20supcon} and \textsc{SupMoCo} \citep{majumder21supmoco}.
Each model consists of a ResNet-50 encoder \citep{he2016deep} followed by a 2-layer MLP projector and predictor, except for \textsc{SimSiam} and \textsc{SupSiam}, which utilize a 3-layer MLP projector following the original configuration by \citet{chen21siamese}.
We pretrain models on ImageNet-100 \citep{deng2009imagenet, tian19cmc} for 200 epochs with a batch size of 128.
For data augmentation, we apply random crop, random horizontal flip, color jitter, random grayscale, and Gaussian blur, following \citet{chen20simple}.
For methods utilizing the target pool, we set the size of the target pool $|Q|$ to 8192 and obtain the supervised target $z_2^\text{sup}$ by sampling and averaging all positives in the target pool%
\footnote{%
Our proposed methods are robust to the number of positives from the target pool, as shown in \cref{table:numpos}.
}
unless otherwise stated.
The coefficient $\alpha$ adjusting the contribution of the self-supervised and supervised loss is 0.5, unless otherwise stated.
We repeat all experiments with three pretrained models with different random seeds and report the average performance.

\begin{table}[t]
\caption{Top-1 linear probing accuracy on ImageNet-100 and transfer learning performance on VOC object detection.
The \textbf{best results} are in bold and the \underline{second-best results} are underlined.
Our proposed methods are marked with $\dagger$.}
\label{table:linear-eval&object_detection}
\begin{center}
\resizebox{0.95\columnwidth}{!}{
\begin{tabular}{l|cccc}
\toprule
Dataset & \small{ImageNet-100}  & \multicolumn{2}{c}{VOC} \\
\cmidrule(rl){1-1}\cmidrule(rl){2-2}\cmidrule(rl){3-4}
Method & Top-1  & AP & AP50 \\
\midrule
\textsc{SimCLR}              & 77.35 & 52.06 {\scriptsize $\pm$ 0.31}  & 78.70 {\scriptsize $\pm$ 0.16} \\
\textsc{SupCon}       & \underline{87.40} & 52.53 {\scriptsize $\pm$ 0.47} & 79.44 {\scriptsize $\pm$ 0.21} \\
\cmidrule(rl){1-4}
\textsc{MoCo-v2}          & 78.37 & 52.68 {\scriptsize $\pm$ 0.04}  & 79.08 {\scriptsize $\pm$ 0.24} \\
\textsc{SupMoCo}                & 86.33 & 52.67 {\scriptsize $\pm$ 0.04} & 79.52 {\scriptsize $\pm$ 0.15} \\
\cmidrule(rl){1-4}
\textsc{SimSiam}          & 82.15 & 53.56 {\scriptsize $\pm$ 0.10} & 79.82 {\scriptsize $\pm$ 0.10} \\
\textsc{SupSiam}$^\dagger$               & 87.31 & \textbf{53.89 {\scriptsize $\pm$ 0.26} } & \textbf{80.28 {\scriptsize $\pm$ 0.06}} \\
\cmidrule(rl){1-4}
\textsc{BYOL}                & 84.93 & 53.54 {\scriptsize $\pm$ 0.04} & 79.57 {\scriptsize $\pm$ 0.01} \\
\textsc{SupBYOL}$^\dagger$           & \textbf{87.43} & \underline{53.69 {\scriptsize $\pm$ 0.24}} & \underline{80.26 {\scriptsize $\pm$ 0.17}} \\
\bottomrule
\end{tabular}
}
\end{center}
\vspace{-20pt}
\end{table}

\begin{table*}[t]
\caption{Transfer learning via linear evaluation results on various downstream datasets, where models are pretrained on ImageNet-100.
\textit{CL}, \textit{Sup.}, \textit{EMA} stand for the cases when
negative samples are considered, labels are used for pretraining, and the momentum network is adopted, respectively.
\textit{Avg. Rank} represents the average performance ranking across all datasets.
For each dataset, the \textbf{best results} are in bold and the \underline{second-best results} are underlined.
Our proposed methods are marked with $\dagger$.}
\label{table:transfer}
\begin{center}
\resizebox{\textwidth}{!}{%
\begin{tabular}{l|ccc|cccccccccccc}
\toprule
Method & \textit{CL} & \textit{Sup} & \textit{EMA} & \textit{Avg.Rank} & CIFAR10 & CIFAR100 & DTD & Food & MIT67 & SUN397 & Caltech & CUB200 & Dogs & Flowers & Pets \\
\midrule
\textsc{SimCLR} &\cmark & & & 7.00 & 84.69 & 62.86 & 64.18 & 60.91 & 61.81 & 47.10 & 77.89 & 28.76 & 44.33 & 84.30 & 65.10 \\
\textsc{SupCon} & \cmark & \cmark & & 4.73 & 88.82 & 68.89 & 65.18 & 59.34 & 63.76 & 50.09 & 87.30 & 35.84 & 61.68 & 89.05 & 80.12 \\
\cmidrule{1-16}
\textsc{MoCo-v2} & \cmark & &\cmark   & 7.82 & 83.43 & 61.54 & 61.81 & 57.36 & 59.55 & 45.07 & 77.26 & 27.79 & 46.67 & 82.35 & 68.52 \\
\textsc{SupMoCo} & \cmark & \cmark & \cmark & 4.09 & 89.05 & 69.29 & 65.44 & 59.04 & 63.46 & 50.05 & 87.54 & 37.75 & 62.80 & 89.69 & 80.81 \\
\cmidrule{1-16}
\textsc{SimSiam} & & & & 4.91 &  87.28 & 66.41 & 66.06 & 63.44 & 64.68 & 50.69 & 85.00 & 36.10 & 54.57 & 88.38 & 75.13 \\
\textsc{SupSiam}$^\dagger$ & & \cmark & & \underline{2.27} & \underline{89.95} & \underline{70.88} & 66.51 & 61.46 & 64.45 & \underline{51.50} & \textbf{88.86} & \textbf{43.48} & \underline{64.65} & \underline{90.27} & \underline{82.38} \\
\cmidrule{1-16}
\textsc{BYOL}& & & \cmark & 3.82 & 88.26 & 68.08 & \textbf{67.52} & \textbf{64.63} & \underline{65.70} & 51.21 & 85.85 & 37.10 & 57.80 & 88.14 & 78.78 \\
\textsc{SupBYOL}$^\dagger$ & & \cmark & \cmark & \textbf{1.36} & \textbf{90.85} & \textbf{72.04} & \underline{67.38} & \underline{64.58} & \textbf{66.64} & \textbf{52.95} & \underline{88.79} & \underline{43.24} & \textbf{65.02} & \textbf{91.09} & \textbf{82.68} \\

\bottomrule
\end{tabular}%
}
\end{center}
\vspace{-11pt}
\end{table*}

\subsection{Linear Evaluation}

We evaluate the quality of representations on the pretrained distribution through a comparison of linear probing performance on ImageNet-100.
Specifically, we take the pretrained and frozen backbone, and train a linear classifier on top of it, following the common protocol in prior works \citep{chen20simple, chen20improved, grill20bootstrap, chen21siamese}.

As shown in \cref{table:linear-eval&object_detection}, incorporating supervision into ANCL enhances linear probing performance on the pretraining dataset.
This suggests that representations learned with supervision more effectively encode the semantic information of the pretrained data distribution.

\subsection{Object Detection}

To assess the generalizability beyond classification tasks, we evaluate pretraining methods on an object detection task. 
Following \citet{he20contrast}, we initialize Faster R-CNN \citep{ren15rcnn} with each pretrained model and fine-tune it on the VOC07+12 training dataset \citep{everingham10voc}.
We measure performance using the COCO evaluation metrics \citep{lin14coco} on the VOC07 test dataset.

As shown on the right side of \cref{table:linear-eval&object_detection}, incorporating supervision into ANCL improves object detection performance, resulting in the best overall performance.
In contrast, the performance gain from supervision in CL is marginal or often detrimental, which aligns with the findings from prior works \citep{khosla20supcon}.
This suggests that supervised ANCL yields more generalizable representations, with the potential to achieve superior performance across various downstream tasks.

\begin{table*}[t]
\caption{Few-shot classification accuracy averaged over 2000 episodes on various datasets, where models are pretrained on ImageNet-100.
\textit{CL}, \textit{Sup}, \textit{EMA} stand for the cases when
negative samples are considered, labels are used for pretraining, and the momentum network is adopted, respectively.
\textit{Avg.Rank} represents the average performance ranking across all datasets.
For each dataset, the \textbf{best results} are in bold and the \underline{second-best results} are underlined.
Our proposed methods are marked with $\dagger$.} 
\label{table:linear-fewshot}
\begin{center}
\resizebox{\textwidth}{!}{%
\begin{tabular}{l|ccc|ccccccccc}
\toprule
Method & \textit{CL} & \textit{Sup} & \textit{EMA} & \textit{Avg.Rank} & Aircraft & CUB200 & FC100 & Flowers & Fungi & Omniglot & DTD & Traffic Signs \\
\midrule
5-way 1-shot \\ 
\cmidrule{1-13}
\textsc{SimCLR} &\cmark & &  & 7.25 & 29.22 {\scriptsize$\pm$ 0.34} & 40.61 {\scriptsize$\pm$ 0.43} & 35.53 {\scriptsize$\pm$ 0.37} & 68.26 {\scriptsize$\pm$ 0.50} & 42.44 {\scriptsize$\pm$ 0.44} & 70.46 {\scriptsize$\pm$ 0.54} & 55.43 {\scriptsize$\pm$ 0.45} & 48.33 {\scriptsize$\pm$ 0.43} \\
\textsc{SupCon} & \cmark & \cmark &  & 3.63 & 31.44 {\scriptsize$\pm$ 0.35} & 48.75 {\scriptsize$\pm$ 0.49} & \underline{45.32 {\scriptsize$\pm$ 0.41}} & 77.99 {\scriptsize$\pm$ 0.44} & 47.42 {\scriptsize$\pm$ 0.45} & 80.66 {\scriptsize$\pm$ 0.45} & 57.57 {\scriptsize$\pm$ 0.47} & 68.66 {\scriptsize$\pm$ 0.47} \\ 
\cmidrule{1-13}
\textsc{MoCo-v2} & \cmark & &\cmark  & 7.38 & 25.54 {\scriptsize$\pm$ 0.28} & 41.24 {\scriptsize$\pm$ 0.46} & 36.73 {\scriptsize$\pm$ 0.36} & 66.48 {\scriptsize$\pm$ 0.50} & 41.84 {\scriptsize $\pm$ 0.44} & 71.12 {\scriptsize $\pm$ 0.51} & 54.75 {\scriptsize $\pm$ 0.46} & 51.05 {\scriptsize $\pm$ 0.43} \\
\textsc{SupMoCo} & \cmark & \cmark & \cmark & 3.25 & 31.12 {\scriptsize $\pm$ 0.35} & 49.04 {\scriptsize $\pm$ 0.49} & 44.13 {\scriptsize $\pm$ 0.41} & \underline{78.90 {\scriptsize $\pm$ 0.43}} & 47.12 {\scriptsize $\pm$ 0.45} & \underline{83.43 {\scriptsize $\pm$ 0.42}} & 56.62 {\scriptsize $\pm$ 0.46} & \textbf{71.17 {\scriptsize $\pm$ 0.47}} \\
\cmidrule{1-13}
\textsc{SimSiam} & & & & 5.00 & 30.67 {\scriptsize $\pm$ 0.35} & 45.06 {\scriptsize $\pm$ 0.47} & 41.51 {\scriptsize $\pm$ 0.40} & 75.68 {\scriptsize $\pm$ 0.47} & 45.22 {\scriptsize $\pm$ 0.46} & 74.64 {\scriptsize $\pm$ 0.50} & 58.28 {\scriptsize $\pm$ 0.47} & 60.03 {\scriptsize $\pm$ 0.45} \\
\textsc{SupSiam}$^\dagger$ & &\cmark &  & \textbf{1.88} & \textbf{33.12 {\scriptsize $\pm$ 0.37}} & \textbf{49.58 {\scriptsize $\pm$ 0.49}} & \textbf{45.56 {\scriptsize $\pm$ 0.41}} & 78.12 {\scriptsize $\pm$ 0.44} & \underline{47.74{\scriptsize $\pm$ 0.46}} & \textbf{84.02 {\scriptsize $\pm$ 0.41}} & 58.06 {\scriptsize $\pm$ 0.48} & \underline{71.00 {\scriptsize $\pm$ 0.48}} \\
\cmidrule{1-13}
\textsc{BYOL} & & &\cmark  & 5.63 & 26.38 {\scriptsize $\pm$ 0.30} & 46.45 {\scriptsize $\pm$ 0.49} & 40.92 {\scriptsize $\pm$ 0.40} & 74.27 {\scriptsize $\pm$ 0.47} & 45.96 {\scriptsize $\pm$ 0.46} & 68.13 {\scriptsize $\pm$ 0.52} & \underline{59.75 {\scriptsize $\pm$ 0.48}} & 57.44 {\scriptsize $\pm$ 0.46} \\
\textsc{SupBYOL}$^\dagger$ & & \cmark & \cmark  & \underline{2.00} & \underline{32.66 {\scriptsize $\pm$ 0.37}} & \underline{49.26 {\scriptsize $\pm$ 0.48}} & 45.28 {\scriptsize $\pm$ 0.41} & \textbf{78.94 {\scriptsize $\pm$ 0.43}} & \textbf{47.81 {\scriptsize $\pm$ 0.46}} & 82.62 {\scriptsize $\pm$ 0.44} & \textbf{59.98 {\scriptsize $\pm$ 0.48}} & 70.34 {\scriptsize $\pm$ 0.48} \\
\midrule
5-way 5-shot \\
\cmidrule{1-13}
\textsc{SimCLR} &\cmark & & & 7.13 & 39.21 {\scriptsize $\pm$ 0.44} & 54.33 {\scriptsize $\pm$ 0.45} & 50.96 {\scriptsize $\pm$ 0.37} & 86.98 {\scriptsize $\pm$ 0.30} & 59.40 {\scriptsize $\pm$ 0.47} & 86.72 {\scriptsize $\pm$ 0.35} & 73.95 {\scriptsize $\pm$ 0.36} & 69.27 {\scriptsize $\pm$ 0.40} \\
\textsc{SupCon} & \cmark & \cmark &  & 3.38 & 44.63 {\scriptsize $\pm$ 0.44} & 64.99 {\scriptsize $\pm$ 0.46} & 64.04 {\scriptsize $\pm$ 0.39} & \underline{92.80 {\scriptsize $\pm$ 0.23}} & \textbf{66.75 {\scriptsize $\pm$ 0.47}} & 93.36 {\scriptsize $\pm$ 0.25} & 75.60 {\scriptsize $\pm$ 0.36} & 85.93 {\scriptsize $\pm$ 0.36} \\
\cmidrule{1-13}
\textsc{MoCo-v2} & \cmark & &\cmark   & 7.50 & 32.84 {\scriptsize $\pm$ 0.35} & 53.42 {\scriptsize $\pm$ 0.47} & 52.70 {\scriptsize $\pm$ 0.36} & 84.72 {\scriptsize $\pm$ 0.32} & 57.54 {\scriptsize $\pm$ 0.48} & 87.74 {\scriptsize $\pm$ 0.34} & 72.66 {\scriptsize $\pm$ 0.37} & 71.93 {\scriptsize $\pm$ 0.39} \\
\textsc{SupMoCo} & \cmark & \cmark & \cmark & \underline{2.63} & 44.43 {\scriptsize $\pm$ 0.44} & 65.63 {\scriptsize $\pm$ 0.46} & 64.30 {\scriptsize $\pm$ 0.39} & \textbf{93.35 {\scriptsize $\pm$ 0.21}} & \underline{66.64 {\scriptsize $\pm$ 0.47}} & \textbf{94.77 {\scriptsize $\pm$ 0.22}} & 74.73 {\scriptsize $\pm$ 0.36} & \textbf{87.64 {\scriptsize $\pm$ 0.34}} \\
\cmidrule{1-13}
\textsc{SimSiam} & & & & 5.25 & 40.34 {\scriptsize $\pm$ 0.44} & 60.66 {\scriptsize $\pm$ 0.48} & 58.68 {\scriptsize $\pm$ 0.38} & 91.04 {\scriptsize $\pm$ 0.26} & 62.19 {\scriptsize $\pm$ 0.49} & 88.92 {\scriptsize $\pm$ 0.32} & 76.22 {\scriptsize $\pm$ 0.36} & 79.50 {\scriptsize $\pm$ 0.39} \\
\textsc{SupSiam}$^\dagger$ & &\cmark & &  \textbf{2.38} & \textbf{45.98 {\scriptsize $\pm$ 0.47}} & \underline{66.70 {\scriptsize $\pm$ 0.45}} & \underline{64.54 {\scriptsize $\pm$ 0.39}} & 92.42 {\scriptsize $\pm$ 0.23} & 66.61 {\scriptsize $\pm$ 0.48} & \underline{94.38 {\scriptsize $\pm$ 0.23}} & 76.43 {\scriptsize $\pm$ 0.36} & \underline{86.88 {\scriptsize $\pm$ 0.36}} \\
\cmidrule{1-13}
\textsc{BYOL} & & &\cmark  & 5.38 & 35.30 {\scriptsize $\pm$ 0.40} & 60.96 {\scriptsize $\pm$ 0.49} & 59.33 {\scriptsize $\pm$ 0.38} & 90.38 {\scriptsize $\pm$ 0.26} & 63.12 {\scriptsize $\pm$ 0.49} & 85.68 {\scriptsize $\pm$ 0.35} & \textbf{77.60 {\scriptsize $\pm$ 0.36}} & 77.07 {\scriptsize $\pm$ 0.40} \\
\textsc{SupBYOL}$^\dagger$ & & \cmark & \cmark  & \textbf{2.38} & \underline{45.81 {\scriptsize $\pm$ 0.48}} & \textbf{66.72 {\scriptsize $\pm$ 0.46}} & \textbf{65.72 {\scriptsize $\pm$ 0.38}} & 92.78 {\scriptsize $\pm$ 0.22} & 66.47 {\scriptsize $\pm$ 0.48} & 94.06 {\scriptsize $\pm$ 0.24} & \underline{77.57 {\scriptsize $\pm$ 0.36}} & 86.21 {\scriptsize $\pm$ 0.37} \\
\bottomrule
\end{tabular}
}
\end{center}
\vspace{-11pt}
\end{table*}

\subsection{Transfer Learning via Linear Evaluation}

For transfer learning, we evaluate the top-1 accuracy across 11 downstream datasets: CIFAR10/CIFAR100 \citep{krizhevsky2009learning}, DTD \citep{cimpoi14dtd}, Food \citep{bossard14food}, MIT67 \citep{quattoni09mit}, SUN397 \citep{xiao10sun}, Caltech101 \citep{fei04caltech}, CUB200 \citep{welinder2010cub}, Dogs \citep{khosla11dog, deng2009imagenet}, Flowers \citep{nilsback08flower}, and Pets \citep{parkhi12pet}, where detailed information is described in \cref{sec:dataset}.
For evaluation, we follow the linear probing protocol for transfer learning in prior works \citep{kornblith19do, lee21augself}.

As shown in \cref{table:transfer}, incorporating supervision improves performance across all pretraining methods.
Among them, supervised ANCL methods achieve the best performance: \textsc{SupBYOL} and \textsc{SupSiam} outperform others on 9 out of 11 datasets, demonstrating the superiority of supervised ANCL.
Between supervised ANCL methods, \textsc{SupBYOL} exhibits better performance than \textsc{SupSiam} in terms of the average rank, which might be due to the effect of momentum network.
Notably, while the performance gain from incorporating supervision into ANCL is relatively small compared to CL because the self-supervised versions of ANCL already exhibit strong performance, we observe a significant improvement on fine-grained datasets, such as CUB200, Dogs, and Pets.
This suggests that learning semantically aligned representations while retaining intra-class variance in ANCL is crucial for recognizing fine-grained information.

\subsection{Few-Shot Classification}

To assess the generalizability of learned representations under limited conditions, we conduct transfer learning experiments on few-shot classification tasks following the linear probing protocol for few-shot learning in \citet{lee21augself}.
We evaluate the accuracy of 5-way 1-shot and 5-way 5-shot scenarios over 2000 episodes across 8 downstream datasets:
Aircraft \citep{maji13aircraft}, CUB200 \citep{welinder2010cub}, FC100 \citep{oreshkin18fc}, Flowers \citep{nilsback08flower}, Fungi \citep{schroeder18fungi}, Omniglot \citep{lake15omni}, DTD \citep{cimpoi14dtd}, and Traffic Signs \citep{houben13traffic}.
\Cref{table:linear-fewshot} shows a similar trend to other experiments that incorporating supervision improves both CL and ANCL, while supervised ANCL achieves the best performance in most cases.

\begin{table*}[t]
\caption{Transfer learning via linear evaluation results on various downstream datasets, where the model is \textsc{SupSiam}-pretrained with different target pool design on ImageNet-100.
\textit{Avg} represents the average performance across each dataset.
For each dataset, the \textbf{best results} are in bold and the \underline{second-best results} are underlined.
}
\label{table:pool-design}
\begin{center}
\resizebox{\textwidth}{!}{%
\begin{tabular}{c|c|cccccccccccc}
\toprule
Target Pool & Size & \textit{Avg} & CIFAR10 & CIFAR100 & DTD & Food & MIT67 & SUN397 & Caltech & CUB200 & Dogs & Flowers & Pets \\
\midrule
\text{Class-agnostic} & 8192 & \underline{70.40} & \underline{89.95} & \underline{70.88} & 66.51 & 61.46 & 64.45 & \underline{51.50} & 88.86 & \textbf{43.48} & 64.65 & \textbf{90.27} & 82.38 \\
\text{Class-wise} & 80 $\times$ 100 & 70.21 & 89.78 & 70.58 & 66.49 & \textbf{61.54} & 64.85 & 51.22 & 88.64 & 42.68 & 65.03 & \underline{89.86} & 81.61 \\
\text{Class-wise} & 20 $\times$ 100 & \textbf{70.44} & \textbf{90.02} & \textbf{71.07} & \underline{66.92} & \underline{61.49} & \underline{65.11} & 51.15 & 88.67 & 43.19 & \textbf{65.16} & 89.27 & \underline{82.39} \\
\text{Class-wise} & 5 $\times$ 100 & 70.27 & 89.67 & \underline{70.88} & 66.17 & 61.32 & 64.30 & 51.49 & \underline{88.96} & 42.80 & 64.82 & \underline{89.86} & \textbf{82.75} \\
\text{Class-wise} & 1 $\times$ 100 & 70.23 & 89.70 & 70.73 & 66.06 & 61.45 & 64.82 & 51.02 & \textbf{88.97} & \underline{43.42} & 64.26 & 89.71 & 82.37 \\
\text{Learnable} & 100 & 70.37 & 89.91 & 70.41 & \textbf{67.00} & 61.36 & \textbf{65.15} & \textbf{51.58} & 88.81 & 42.97 & \underline{65.08} & 89.57 & 82.28 \\
\bottomrule
\end{tabular}%
}
\end{center}
\vspace{-11pt}
\end{table*}

\begin{table}[t]
\vspace{-7pt}
\caption{Ablation study on the target pool (Pool) and the momentum network (EMA) for avoiding collapse while improving representations learned via supervised ANCL on CIFAR100.}
\label{table:ablation_collapse}
\begin{center}
\begin{tabular}{cc|cc}
\toprule
Pool & EMA  & Collapse & $k$-NN \\
\midrule
\xmark & \xmark & \cmark & 1.00 \\
\cmark & \xmark & \xmark & 73.92 \\
\xmark & \cmark & \xmark & 73.32 \\
\cmark & \cmark & \xmark & 74.55 \\
\bottomrule
\end{tabular}
\end{center}
\vspace{-20pt}
\end{table}

\subsection{Ablation Study on Target Pool Design}

In this section, we investigate the design choices for the target pool.
In our experiments, the pretraining dataset ImageNet-100 consists of 100 classes, such that the probability of missing any class in the target pool is negligible with a target pool size of 8192.
However, with a larger number of classes, some classes might not exist in the target pool if it is updated in a class-agnostic manner.
To address this concern, we consider two alternative target pool designs:
1) managing class-wise queues as the target pool, and
2) maintaining learnable class prototypes using the EMA update rule.
Additionally, we adjust the size of the class-wise queues to determine the optimal number of latent features required to ensure good performance.

As shown in \cref{table:pool-design}, performance remains consistent regardless of the target pool design.
For the class-wise queues, increasing the number of features stored per class slightly enhances performance, with the best performance observed at 20 features per class, though the gain is overall marginal.
In  all designs, the size of the target pool grows proportionally to the number of classes and/or the feature dimension, which is equivalent to a linear classifier, such that its memory consumption is negligible; \eg, the linear classifier takes only 2\% of the parameters in ResNet-50.
Nonetheless, a more sophisticated design of the target pool might be effective, which we leave for future works.

\subsection{Ablation Study on Representation Collapse}

In this section, we investigate when collapse occurs in supervised ANCL.
Specifically, we investigate the effect of the target pool and the momentum network, where the method only with the target pool is essentially \textsc{SupSiam}, and the one with both components corresponds to \textsc{SupBYOL}.
We pretrain ResNet-18 followed by a 2-layer MLP projector and predictor on CIFAR100.

As observed in \cref{table:ablation_collapse}, employing either the target pool or the momentum network effectively prevents collapse.
We hypothesize that updating the target differently from the anchor helps to prevent collapse, which is the common behavior of both strategies.

\section{Conclusion}

In this paper, we study supervised asymmetric non-contrastive learning (ANCL) for representation learning.
We demonstrate that introducing supervision to ANCL reduces intra-class variance, and that balancing the contributions of the supervised and self-supervised losses is crucial to learn good representations.
We experiment the proposed supervised ANCL methods with baselines across various datasets and tasks, demonstrating the effectiveness of supervised ANCL.
We believe our work motivates future research to integrate supervised ANCL into their applications.

\section*{Acknowledgements}

This work was supported by the National Research Foundation of Korea (NRF) grant funded by the Korea government (MSIT) (2022R1A4A1033384) and the Yonsei University Research Fund (2024-22-0148). We thank Jy-yong Sohn and Chungpa Lee for helpful discussions.

\section*{Impact Statement}

(Non-)contrastive learning typically requires substantial training costs; for instance, training ResNet-50 with MoCo-v2 \citep{chen20improved} for 800 epochs requires 9 days on 8 V100 GPUs, raising concerns about environmental impacts, such as carbon emissions.
However, the proposed idea of incorporating supervision leads to learning better representations while maintaining similar computational complexity comparable to that of self-supervised learning.
This suggests that supervision can mitigate computational demands and potentially address associated environmental concerns.

\bibliography{ref}
\bibliographystyle{icml2024}

\newpage
\appendix
\onecolumn
\numberwithin{table}{section}
\numberwithin{figure}{section}
\numberwithin{equation}{section}

\section{Detailed Proofs for \Cref{sec:theoretical}}
\label{sec:proof}

\subsection{Derivation of 
\textbf{\Cref{eq:loss_expect}}}
\vspace{-5pt}

To derive this, recall the supervised ANCL loss with constraints in \cref{eq:loss_contrained}:
\begin{align*}
    \ell
 &= \alpha \left\| W_p z_1 - z_2 \right\|_2^2 +
    (1-\alpha) \left\| W_p z_1 - z_2^\text{sup} \right\|_2^2
    \quad\text{ s.t. }\quad
    \left\| z_2 \right\|_2^2
  = \left\| z_2^\text{sup} \right\|_2^2
  = \left\| W_p z_1 \right\|_2^2
  = 1. \tag{\ref{eq:loss_contrained}}
\end{align*}
We first expand the loss in \cref{eq:loss_contrained} and apply constraints to simplify the expression:
\begin{align}
\begin{aligned}[b]
    \ell
 &= \alpha \left(
        \left\| W_p z_1 \right\|_2^2 +
        \left\| z_2 \right\|_2^2 -
        2 z_1^\top W_p^\top z_2
    \right) +
    (1-\alpha) \left(
        \left\| W_p z_1 \right\|_2^2 +
        \left\| z_2^\text{sup} \right\|_2^2 -
        2 z_1^\top W_p^\top z_2^\text{sup}
    \right) \\
 &= \alpha \left( 2 - 2 z_1^\top W_p^\top z_2 \right) +
    (1-\alpha) \left( 2 - 2 z_1^\top W_p^\top z_2^\text{sup} \right) \\
 &= 2 - 2 \alpha \cdot z_1^\top W_p^\top z_2 +
    2 (1-\alpha) \cdot z_1^\top W_p^\top z_2^\text{sup}.
\end{aligned}
\end{align}
Then, the Lagrangian function is formulated as follows:
{%
\thinmuskip=2.5mu 
\medmuskip=3.5mu plus 2mu minus 4mu 
\thickmuskip=5mu plus 5mu
\begin{align}
\begin{aligned}[b]
    \mathcal{L}
 &= 2 -2 \alpha \cdot z_1^\top W_p^\top z_2
    -2 (1-\alpha) \cdot z_1^\top W_p^\top z_2^\text{sup} + \lambda_1 \left(
        z_2^\top z_2 - 1
    \right) +
    \lambda_2 \left(
        z_2^{\text{sup} \top} z_2^\text{sup} - 1
    \right) +
    \lambda_3 \left( 
        z_1^\top W_p^\top W_p z_1 - 1
    \right) \\
 &= 2 -2 \alpha \cdot \tr\left( W_p^\top z_2 z_1^\top \right)
    -2 (1-\alpha) \cdot \tr\left( W_p^\top z_2^\text{sup} z_1^\top \right) \\
   &\quad + \lambda_1 \left(
        \tr\left( z_2 z_2^\top \right) - 1
    \right) +
    \lambda_2 \left(
        \tr\left( z_2^\text{sup} z_2^{\text{sup} \top} \right) - 1
    \right) +
    \lambda_3 \left( 
        \tr\left( W_p^\top W_p z_1 z_1^\top \right) - 1
    \right),
\end{aligned}
\end{align}
}%
where $\lambda_1$, $\lambda_2$ and $\lambda_3$ are the Lagrange multipliers.
Finally, taking the expectation over $x_1$, $x_2$, and $x_2^\text{sup}$ yields \cref{eq:loss_expect}:
\begin{align*}
    \mathcal{L}
 &= 2 -2 \alpha \cdot \tr\left( W_p^\top \mathbb{E}\left[ z_2 z_1^\top \right] \right)
    -2 (1-\alpha) \cdot \tr\left( W_p^\top \mathbb{E}\left[ z_2^\text{sup} z_1^\top \right] \right) \\
 &\quad + \lambda_1 \left(
        \tr\left( \mathbb{E}\left[ z_2 z_2^\top \right] \right) - 1
    \right) +
    \lambda_2 \left(
        \tr\left( \mathbb{E}\left[ z_2^\text{sup} z_2^{\text{sup} \top} \right] \right) - 1
    \right) +
    \lambda_3 \left( 
        \tr\left( W_p^\top W_p \mathbb{E}\left[ z_1 z_1^\top \right] \right) - 1
    \right). \tag{\ref{eq:loss_expect}}
\end{align*}

\subsection{Proof of \textbf{\Cref{proposition:cov}}}
\vspace{-5pt}

\propositionCov*
\vspace{-5pt}

\begin{proof}
\vspace{-5pt}
Let $S_B = \frac{1}{C} \sum_y \mu_y \mu_y^{\top}$ be the inter-class covariance,
$S_W = \frac{1}{C} \sum_y \Sigma_y$ be the intra-class covariance, and
$S_e = \sigma^2_e I$ be the variance of the augmentation noise.
\begin{align*}
    \mathbb{E}_{x_1}\left[ z_1 z_1^{\top} \right] 
 &= W \mathbb{E}_X \left[
        \mathbb{E}_{\widetilde{X}|X} \left[ x_1 x_1^\top \right]
    \right] W^\top
  = W \mathbb{E}_X \left[
        x x^\top + \sigma^2_e I
    \right] W^\top \nonumber \\
 &= W \mathbb{E}_Y \left[
        \mathbb{E}_{X|Y} \left[ xx^\top + \sigma^2_e I \right]
    \right] W^\top
  = W \mathbb{E}_Y \left[
        \mu_y \mu_y^\top + \Sigma_y + \sigma^2_e I
    \right] W^\top \nonumber \\
 &= W \left(
        \frac{1}{C} \sum_y \left( \mu_y \mu_y^\top + \Sigma_y \right) + \sigma^2_e I
    \right) W^\top \nonumber \\
 &= W \left( S_B + S_W + S_e \right)W^\top
  = (1 + \sigma_e^2) W W^\top, \\
    \mathbb{E}_{x_1,x_2}\left[ z_2 z_1^{\top} \right] 
 &= W \mathbb{E}_X \left[
        \mathbb{E}_{\widetilde{X}|X} \left[x_2 x_1^\top \right]
    \right] W^\top
  = W \mathbb{E}_X \left[
        \mathbb{E}_{\widetilde{X}|X} \left[ x_2 \right] \mathbb{E}_{\widetilde{X}|X} \left[ x_1^\top \right]
    \right] W^\top
  = W \mathbb{E}_X \left[ xx^\top \right] W^\top \nonumber \\
 &= W \mathbb{E}_Y \left[
        \mathbb{E}_{X|Y} \left[xx^\top \right]
    \right] W^\top
  = W \mathbb{E}_Y \left[
        \mu_y \mu_y^\top + \Sigma_y
    \right] W^\top \nonumber \\
 &= W \left(
        \frac{1}{C} \sum_y \left(\mu_y \mu_y^\top + \Sigma_y \right)
    \right) W^\top \nonumber \\
 &= W \left( S_B + S_W \right)W^\top
  = W W^\top, \\
    \mathbb{E}_{x_1,x_2^\text{sup}}\left[ z_2^\text{sup} z_1^\top \right] 
 &= W \mathbb{E}_{X}\left[
        \mathbb{E}_{\widetilde{X}|X} \left[x_2^\text{sup} x_1^\top \right]
    \right] W^\top 
    = W \mathbb{E}_{X}\left[
        \mathbb{E}_{\widetilde{X}|X} \left[x_2^\text{sup} \right] \mathbb{E}_{\widetilde{X}|X} \left[x_1^\top \right]
    \right] W^\top
    = W \mathbb{E}_{X} \left[ x^\text{sup} x^\top \right] W^\top \nonumber \\
 &= W \mathbb{E}_Y \left[
        \mathbb{E}_{X | Y} \left[x^\text{sup} x^\top \right]
    \right] W^\top 
    = W \mathbb{E}_Y \left[
        \mathbb{E}_{X | Y} \left[x^\text{sup} \right] \mathbb{E}_{X | Y} \left[x^\top \right]
    \right] W^\top
  = W \mathbb{E}_Y \left[ \mu_y \mu_y^\top \right] W^\top  \nonumber \\
 &= W \left(
        \frac{1}{C} \sum_y \mu_y \mu_y^\top
    \right) W^\top = W S_B W^\top.
\tag{\ref{eq:cov}}
\end{align*}
Let $S_B = V \Lambda_B V^\top$ be the eigendecomposition, where $V$ is an orthogonal matrix and $\Lambda_B$ is a diagonal matrix of the eigenvalues.
Then, $S_T = S_B + S_W$ and $S_e$ share the same eigenspace with $S_B$, as they are (scaled) identity matrices.
\begin{align*}
    \mathbb{E}\left[ z_1 z_1^{\top} \right]
 &= W \left( S_B + S_W + S_e \right)W^\top
  = W V \left( \Lambda_B + \Lambda_W + \sigma^2_e I \right) V^\top W^\top, \\
    \mathbb{E}\left[ z_2 z_1^{\top} \right]
 &= W \left( S_B + S_W \right)W^\top
  = W V \left( \Lambda_B + \Lambda_W \right) V^\top W^\top, \\
    \mathbb{E}\left[ z_2^\text{sup} z_1^\top \right]
 &= W S_B W^\top
  = W V \Lambda_B V^\top W^\top,
\tag{\ref{eq:cov_eig}}
\end{align*}
where $\Lambda_W = I - \Lambda_B$ is the eigenvalue matrix of $S_W$.
It can be seen that the covariance matrices of features in \cref{eq:cov_eig} share the same eigenspace in the data space.
\end{proof}

\subsection{Proof of \textbf{\Cref{thm:optimal_pred}}}

\theoremOptimalPred*

\begin{proof}
Recall \cref{eq:loss_expect}:
\begin{align*}
    \mathcal{L}
 &= 2 -2 \alpha \cdot \tr\left( W_p^\top \mathbb{E}\left[ z_2 z_1^\top \right] \right)
    -2 (1-\alpha) \cdot \tr\left( W_p^\top \mathbb{E}\left[ z_2^\text{sup} z_1^\top \right] \right) \\
 &\quad + \lambda_1 \left(
        \tr\left( \mathbb{E}\left[ z_2 z_2^\top \right] \right) - 1
    \right) +
    \lambda_2 \left(
        \tr\left( \mathbb{E}\left[ z_2^\text{sup} z_2^{\text{sup} \top} \right] \right) - 1
    \right) +
    \lambda_3 \left( 
        \tr\left( W_p^\top W_p \mathbb{E}\left[ z_1 z_1^\top \right] \right) - 1
    \right). \tag{\ref{eq:loss_expect}}
\end{align*}
To derive the optimal $W_p^*$, we take the partial derivative $\frac{\partial \mathcal{L}}{\partial W_p}$ and replace the expression of covariance matrices with \cref{eq:cov}:
\begin{align}
\begin{aligned}[b]
\label{eq:dldwp}
    \frac{\partial \mathcal{L}}{\partial W_p}
 &= -2 \alpha \cdot \mathbb{E}\left[ z_2 z_1^\top \right]
    -2 (1-\alpha) \cdot \mathbb{E}\left[ z_2^\text{sup} z_1^\top \right] +
    2\lambda_3 W_p \mathbb{E}\left[ z_1 z_1^\top \right] \\
 &= -2 \alpha \cdot W W^\top
    -2 (1-\alpha) \cdot W S_B W^\top +
    2\lambda_3 \left( 1 + \sigma^2_e \right) \cdot W_p W W^\top.
\end{aligned}
\end{align}
By setting $\frac{\partial \mathcal{L}}{\partial W_p} = 0$, we obtain the optimal predictor $W_p^*$:
\begin{align}
\begin{aligned}[b]
    \lambda_3 \left( 1 + \sigma^2_e \right) \cdot W_p^* W W^\top
 &= W \left( \alpha I + (1-\alpha) S_B \right) W^\top \\
 &= W \left( S_B + \alpha S_W \right) W^\top. \\
    \therefore W_p^* 
 &= \frac{1}{\lambda_3 \left( 1 + \sigma^2_e I \right)}
    W \left( S_B + \alpha S_W \right) W^+.
\end{aligned}
\end{align}
Finally, by substituting the covariance matrices with the eigendecomposition as in \cref{eq:cov}, we obtain the following expression:
\begin{align}
    W_p^*= \frac{1}{\lambda_3} WV \left( \Lambda_B + \alpha \Lambda_W \right) \left( \Lambda_B + \Lambda_W + \sigma^2_e I \right)^{-1} V^\top W^+.
\end{align}
From this expression, the optimal predictor $W_p^*$ can be interpreted through a sequence of hypothetical transformations:
1)~mapping features to the data space,
2)~eliminating the augmentation noise and reducing the intra-class variance by a factor of $\alpha$, and
3)~mapping back to the feature space.
\end{proof}
It is noteworthy that \citet{zhuo23rdm} derived an optimal predictor similar to \cref{thm:optimal_pred}.
However, their focus was on the elimination of augmentation noise in the feature space in the context of self-supervised learning.

\newpage
\subsection{Proof of \textbf{\Cref{thm:align}}}

\theoremAlign*

\begin{proof}
Recall \cref{eq:loss_expect}:
\begin{align*}
    \mathcal{L}
 &= 2 -2 \alpha \cdot \tr\left( W_p^\top \mathbb{E}\left[ z_2 z_1^\top \right] \right)
    -2 (1-\alpha) \cdot \tr\left( W_p^\top \mathbb{E}\left[ z_2^\text{sup} z_1^\top \right] \right) \\
 &\quad + \lambda_1 \left(
        \tr\left( \mathbb{E}\left[ z_2 z_2^\top \right] \right) - 1
    \right) +
    \lambda_2 \left(
        \tr\left( \mathbb{E}\left[ z_2^\text{sup} z_2^{\text{sup} \top} \right] \right) - 1
    \right) +
    \lambda_3 \left( 
        \tr\left( W_p^\top W_p \mathbb{E}\left[ z_1 z_1^\top \right] \right) - 1
    \right),
\tag{\ref{eq:loss_expect}}
\end{align*}
where stop-gradient is applied to $z_2$ and $z_2^\text{sup}$.
Recall the partial derivative $\frac{\partial \mathcal{L}}{\partial W_p}$ is derived in \cref{eq:dldwp}:
\begin{align*}
    \frac{\partial \mathcal{L}}{\partial W_p}
 &= -2 \alpha \cdot \mathbb{E}\left[ z_2 z_1^\top \right]
    -2 (1-\alpha) \cdot \mathbb{E}\left[ z_2^\text{sup} z_1^\top \right] +
    2\lambda_3 W_p \mathbb{E}\left[ z_1 z_1^\top \right] \\
 &= -2 \alpha \cdot W W^\top
    -2 (1-\alpha) \cdot W S_B W^\top +
    2\lambda_3 \left( 1 + \sigma^2_e \right) \cdot W_p W W^\top.
\tag{\ref{eq:dldwp}}
\end{align*}
To derive the partial derivative $\frac{\partial \mathcal{L}}{\partial W}$, we express \cref{eq:loss_expect} in terms of $W$'s and $x$'s:
{%
\begin{align}
\begin{aligned}[b]
\label{eq:loss_expect_x}
    \mathcal{L}
 &= 2 -2 \alpha \cdot \tr\left( W_p^\top \widehat{W} \mathbb{E}\left[ x_2 x_1^\top \right] W^\top \right)
    -2 (1-\alpha) \cdot \tr\left( W_p^\top \widehat{W} \mathbb{E}\left[ x_2^\text{sup} x_1^\top \right] W^\top \right) \\
 &\quad + \lambda_1 \left(
        \tr\left( \widehat{W} \mathbb{E}\left[ x_2 x_2^\top \right] \widehat{W}^\top \right) - 1
    \right) +
    \lambda_2 \left(
        \tr\left( \widehat{W} \mathbb{E}\left[ x_2^\text{sup} x_2^{\text{sup} \top} \right] \widehat{W}^\top \right) - 1
    \right) \\
 &\quad + \lambda_3 \left( 
        \tr\left( W_p^\top W_p W \mathbb{E}\left[ x_1 x_1^\top \right] W^\top \right) - 1
    \right),
\end{aligned}
\end{align}
}%
where $W$'s with stop-gradient are emphasized by $\widehat{W} = \sg(W)$, which are regarded as constants when taking the derivative.
Then, the partial derivative $\frac{\partial \mathcal{L}}{\partial W}$ is derived as follows:
\begin{align}
\begin{aligned}[b]
\label{eq:dldw}
    \frac{\partial \mathcal{L}}{\partial W}
 &= -2 \alpha \cdot W_p^\top W \mathbb{E}\left[ x_2 x_1^\top \right]
    -2 (1-\alpha) \cdot W_p^\top W \mathbb{E}\left[ x_2^\text{sup} x_1^\top \right] + 2\lambda_3 \cdot W_p^\top W_p W \mathbb{E}\left[ x_1 x_1^\top \right] \\
 &= -2 \alpha \cdot W_p^\top W
    -2 (1-\alpha) \cdot W_p^\top W S_B +
    2\lambda_3 \left( 1 + \sigma^2_e \right) \cdot W_p^\top W_p W.
\end{aligned}
\end{align}
Left-multiplying \cref{eq:dldwp} by $W_p^\top$ and right-multiplying \cref{eq:dldw} by $W^\top$ establishes the equality of them:
\begin{align}
\begin{aligned}[b]
    W_p^\top \frac{\partial \mathcal{L}}{\partial W_p} 
 &= -2 \alpha \cdot W_p^\top W W^\top
    -2 (1-\alpha) \cdot W_p^\top W S_B W^\top +
    2\lambda_3 \left( 1 + \sigma^2_e \right) \cdot W_p^\top W_p W W^\top \\
 &= \frac{\partial \mathcal{L}}{\partial W} W^\top. 
\label{eq:dldwp_dldw}
\end{aligned}
\end{align}
Now, we consider the update rule with the current iteration number $t$, the learning rate $\beta$, and the weight decay $\eta$:
\begin{align}
\begin{aligned}[b]
    \frac{d W_p}{dt} = -\beta \frac{\partial \mathcal{L}}{\partial W_p} - \eta W_p, \quad
    \frac{d W}{dt} = -\beta \frac{\partial \mathcal{L}}{\partial W} - \eta W.
\end{aligned}
\end{align}
Substituting this expression into \cref{eq:dldwp_dldw} results in the following equality:
\begin{align}
\begin{aligned}[b]
    W_p^\top \frac{d W_p}{dt} + \eta W_p^\top W_p
  = -\beta W_p^\top \frac{\partial \mathcal{L}}{\partial W_p}
  = -\beta \frac{\partial \mathcal{L}}{\partial W} W^\top
  = \frac{d W}{dt} W^\top + \eta W W^\top.
\end{aligned}
\end{align}
Note that this is a differential equation, where it can be solved by multiplying both side by $e^{2 \eta t}$:
\begin{align}
\begin{aligned}[b]
    \frac{d}{dt} \left( e^{2 \eta t} W_p^\top W_p \right)
  = \frac{d}{dt} \left( e^{2 \eta t} W W^\top \right),
\end{aligned}
\end{align}
then, integrating with respect to $t$ and multiplying by $e^{-2 \eta t}$ yields the solution:
\begin{align}
\begin{aligned}[b]
     W_p^\top W_p = W W^\top + e^{-2 \eta t} c,
\end{aligned}
\end{align}
where $c$ is a constant with respect to $t$.
Finally, the constant becomes negligible as $t \rightarrow \infty$, \ie, at the optimal state, such that we obtain the following expression:
\begin{align}
\begin{aligned}[b]
     W_p^{* \top} W_p^* \approx W^* W^{* \top}.
\end{aligned}
\end{align}
The equality implies that they share the eigenspace.
\end{proof}
Note that \cref{thm:align} holds in self-supervised ANCL as shown in \citet{tian21, liu22gap}, and it remains valid when supervision is incorporated.

\section{Toy Experiment Setup}
\label{sec:toy_setting}

We provide a detailed description of toy experiments in \cref{sec:role}.
We generate a synthetic toy dataset to verify that balancing the contributions of supervision and self-supervision is crucial for the generalization of learned representations. 
The dataset consists of three classes, each following a Gaussian distribution with orthogonal mean vectors and a shared isotropic covariance matrix with a scale of 0.35.
The mean vectors are obtained by taking the left singular vectors of a random matrix sampled from a standard Gaussian distribution.
The synthetic data has 2048 dimensions, and data augmentation is performed by replacing 60\% of the dimensions with the corresponding dimensions of the overall data mean vector. 
The training dataset consists of 3000 samples, with 1000 samples per class, and similarly, the test dataset consists of 1500 samples, with 500 samples per class.

For the supervised ANCL approach, \textsc{SupSiam} is utilized with varying $\alpha$, where the encoder, projector, and predictor each consist of a linear layer without batch normalization \citep{ioffe2015batch}.
The output dimension of the projector/predictor is set to 128.
The model is trained for 200 epochs using the SGD optimizer, with a batch size of 256, learning rate of 0.05, momentum of 0.9, and weight decay of 5e-4.
A cosine learning rate schedule \citep{loshchilov17sgdr} is applied except for the predictor, following the prior work \citep{chen21siamese}.

\section{Pretraining Setup}
\label{sec:pretrain_app}

We provide a detailed description of the pretraining setup.
Each model consists of a ResNet-50 encoder \citep{he2016deep} followed by a 2-layer MLP projector and predictor, except for \textsc{SimSiam} and \textsc{SupSiam}, which utilize a 3-layer MLP projector following the original configuration by \citet{chen21siamese}.
We pretrain models on ImageNet-100 \citep{deng2009imagenet, tian19cmc} for 200 epochs with a batch size of 128.
We utilize the SGD optimizer with a momentum of 0.9, and a weight decay of 1e-4.
A cosine learning rate schedule \citep{loshchilov17sgdr} is applied to the encoder and projector.
We maintain a constant learning rate without decay for the predictor, following the prior work \citep{chen21siamese}.
Other method-specific details are provided below:

\begin{itemize}

    \item \textbf{\textsc{SimCLR}} \citep{chen20simple}. 
    The learning rate is set to 0.1 and the temperature parameter for contrastive loss is 0.1.
    The projector consists of 2 MLP layers with an output dimension of 128.
    \begin{equation}
        \ell_{\text{SimCLR}} = -\text{log}\frac{\text{exp}(z_1 \cdot z_2 / \tau ) }{\sum_{z_a \in z_2 \cup Z_n}\text{exp}(z_1 \cdot z_a / \tau) }, 
    \end{equation}
    where $z_2$ and $z_a$ are L2-normalized and $Z_n$ is the set of negative pairs of $z_1$ obtained from the same batch.
    
    \item \textbf{\textsc{SupCon}} \citep{khosla20supcon}.
    The learning rate is set to 0.15 and the temperature parameter for contrastive loss is 0.1. 
    The projector consists of 2 MLP layers with an output dimension of 128.
    \begin{equation}        
        \ell_{\text{SupCon}} = -\frac{1}{M+1} \text{log}\frac{\text{exp}(z_1 \cdot z_2 / \tau ) }{\sum_{z_a \in B' \cup Z_n}\text{exp}(z_1 \cdot z_a / \tau) } -\frac{1}{M+1}\sum_{z_j \in B' \setminus z_2} \text{log}\frac{\text{exp}(z_1 \cdot z_j / \tau ) }{\sum_{z_a \in B' \cup Z_n}\text{exp}(z_1 \cdot z_a / \tau) }, 
    \end{equation}
    where $z_2$ and $z_a$ are L2-normalized, $B'$ is the set of positive pairs of $z_1$ obtained from the same batch, with a cardinality of $M+1$ and $Z_n$ is the set of negative pairs of $z_1$ obatined from the same batch.
    
    \item \textbf{\textsc{MoCo-v2}} \citep{chen20improved}. 
    The learning rate is set to 0.03 and the temperature parameter for contrastive loss is 0.2. 
    The size of memory bank (target pool) $|Q|$ is 8192, and the exponential moving average (EMA) parameter is 0.999. 
    The projector consists of 2 MLP layers with an output dimension of 128.
    \begin{equation}
        \ell_{\text{MoCo}} = -\text{log}\frac{\text{exp}(z_1 \cdot z_2 / \tau ) }{\sum_{z_a \in z_2 \cup Z_n}\text{exp}(z_1 \cdot z_a / \tau) }, 
    \end{equation}
    where $z_2$ and $z_a$ are L2-normalized and $Z_n$ is the set of negative pairs of $z_1$ obtained from the queue.

    \item \textbf{\textsc{SupMoCo}} \citep{majumder21supmoco}. 
    The learning rate is set to 0.1 and temperature parameter is 0.07. 
    The size of memory bank (target pool) $|Q|$ is 8192 and the EMA parameter is 0.999. 
    The projector consists of 2 MLP layers with an output dimension of 128.
    \begin{equation}        
        \ell_{\text{SupMoCo}} = -\frac{1}{M+1} \text{log}\frac{\text{exp}(z_1 \cdot z_2 / \tau ) }{\sum_{z_a \in Q' \cup Z_n}\text{exp}(z_1 \cdot z_a / \tau) } -\frac{1}{M+1}\sum_{z_j \in Q' \setminus z_2} \text{log}\frac{\text{exp}(z_1 \cdot z_j / \tau ) }{\sum_{z_a \in Q' \cup Z_n}\text{exp}(z_1 \cdot z_a / \tau) }, 
    \end{equation}
    where $z_2$, $z_a$ and $z_j$ are L2-normalized, $Q'$ is the set of positive pairs of $z_1$ obtained from the same batch and the queue, with a cardinality of $M+1$ and $Z_n$ is the set of negative pairs of $z_1$ obtained from the same batch and the queue.

    \item \textbf{\textsc{BYOL}} \citep{grill20bootstrap}. 
    The learning rate is set to 0.2.
    The EMA parameter starts from 0.996 and is increased to one during training.
    The projector consists of 2 MLP layers with an output dimension of 256. 
    The predictor has 2 MLP layers with a hidden dimension of 4096.
    \begin{equation}        
        \ell_{\text{BYOL}} = \left \|p_1 - \sg(z_2) \right \|_2^2, 
    \end{equation}
    where $p_1$ and $z_2$ are L2-normalized and $\sg$ denotes the stop-gradient.
    
    \item \textbf{\textsc{SupBYOL}}. 
    The learning rate is set to 0.2.
    The size of target pool $|Q|$ is 8192 and the supervised target $z_2^\text{sup}$ is obtained by sampling and averaging all positives in the target pool.
    The EMA parameter starts from 0.996 and is increased to one during training. 
    The projector consists of 2 MLP layers with an output dimension of 256, and the predictor has 2 MLP layers with a hidden dimension of 4096.
    \begin{equation}        
        \ell_{\text{SupBYOL}} = \alpha \cdot \left \|p_1 - \sg(z_2) \right \|_2^2 + (1-\alpha) \cdot \left\| p_1 - \sg\left(z_2^\text{sup}\right)\right\|_2^2,
        \ z_2^\text{sup} = \frac{1}{M}\sum_{z'_2 \in Q_y} z'_2,
    \end{equation}
    where $p_1$, $z_2$ and $z'_2$ are L2-normalized, $\sg$ denotes the stop-gradient, and $Q_y \subseteq Q$ is the set of targets of $p_1$ sampled from the target pool sharing the sample label with $p_1$, with a cardinality of $M$.
    
    \item \textbf{\textsc{SimSiam}} \citep{chen20simple}. 
    The learning rate is set to 0.2
    with a linear learning rate warm-up for the first 40 epochs.
    The projector consists of 3 MLP layers with an output dimension of 2048. 
    The predictor has 2 MLP layers with a hidden dimension of 512.
    \begin{equation}        
        \ell_{\text{SimSiam}} = \left \|p_1 - \sg(z_2) \right \|_2^2, 
    \end{equation}
    where $p_1$ and $z_2$ are L2-normalized and $\sg$ denotes the stop-gradient.

    \item \textbf{\textsc{SupSiam}}.
    The learning rate is set to 0.2 with a linear learning rate warm-up for the first 40 epochs.
    The size of target pool $|Q|$ is 8192 and the supervised target $z_2^\text{sup}$ is obtained by sampling and averaging all positives in the target pool.
    The projector consists of 3 MLP layers with an output dimension of 2048, and the predictor has 2 MLP layers with a hidden dimension of 512.
    \begin{equation}        
        \ell_{\text{SupSiam}} = \alpha \cdot \left \|p_1 - \sg(z_2) \right \|_2^2 + (1-\alpha) \cdot \left \|p_1 - \sg(z_2^\text{sup}) \right \|_2^2 , 
        \ z_2^\text{sup} = \frac{1}{M}\sum_{z'_2 \in Q_y} z'_2,
    \end{equation}
    where $p_1$, $z_2$ and $z'_2$ are L2-normalized, $\sg$ denotes the stop-gradient, and $Q_y \subseteq Q$ is the set of targets of $p_1$ sampled from the target pool sharing the sample label with $p_1$, with a cardinality of $M$.
    
\end{itemize}

\section{Datasets}
\label{sec:dataset}

\begin{table}[h]
\caption{Detailed summary of datasets.}
\label{datasets}
\begin{center}
\resizebox{\columnwidth}{!}{%
\begin{tabular}{clrrrrc}
\toprule
Category & Dataset & \# of classes & Train set & Valid set & Test set & Metric \\
\midrule
\multirow{11}{*}{\shortstack[c]{(a) Transfer learning via\\linear evaluation}} & CIFAR10 \citep{krizhevsky2009learning} & 10 & 45000 & 5000 & 10000 & Top-1 accuracy \\
& CIFAR100 \citep{krizhevsky2009learning} & 100 & 45000 & 5000 & 10000 & Top-1 accuracy \\
& DTD (split 1) \citep{cimpoi14dtd} & 47 & 1880 & 1880 & 1880 & Top-1 accuracy \\
& Food \citep{bossard14food} & 101 & 68175 & 7575 & 25250 & Top-1 accuracy \\
& MIT67 \citep{quattoni09mit} & 67 & 4690 & 670 & 1340 & Top-1 accuracy \\
& SUN397 (split 1) \citep{xiao10sun} & 397 & 15880 & 3970 & 19850 & Top-1 accuracy \\
& Caltech101 \citep{fei04caltech} & 101 & 2525 & 505 & 5647 & Mean per-class accuracy \\
& CUB200 \citep{welinder2010cub} & 200 & 4990 & 1000 & 5794 & Mean per-class accuracy \\
& Dogs \citep{khosla11dog, deng2009imagenet} & 120 & 10800 & 1200 & 8580 & Mean per-class accuracy \\
& Flowers \citep{nilsback08flower} & 102 & 1020 & 1020 & 6149 & Mean per-class accuracy \\
& Pets \citep{parkhi12pet} & 37 & 2940 & 740 & 3669 & Mean per-class accuracy \\
\midrule
\multirow{8}{*}{(b) Few-shot classification} & Aircraft \citep{maji13aircraft} & 100 & & & 10000 & Average accuracy \\
& CUB200 \citep{welinder2010cub} & 200 & & & 11745 & Average accuracy \\
& FC100 \citep{oreshkin18fc} & 20 & & & 12000 & Average accuracy \\
& Flowers \citep{nilsback08flower} & 102 & & & 8189 & Average accuracy \\
& Fungi \citep{schroeder18fungi} & 1394 & & & 89760 & Average accuracy \\
& Omniglot \citep{lake15omni} & 1623 & & & 32460 & Average accuracy \\
& DTD \citep{cimpoi14dtd} & 47 & & & 5640 & Average accuracy \\
& Traffic Signs \citep{houben13traffic} & 43 & & & 12630 & Average accuracy \\
\bottomrule
\end{tabular}%
}
\end{center}
\end{table}

\Cref{datasets} provides a comprehensive overview of datasets, including evaluation metrics for both (a) transfer learning via linear evaluation and (b) few-shot classification. 
For datasets without an official validation set, a random split is performed using the entire training set.
For the few-shot task, the complete dataset is utilized for all datasets except FC100 \citep{oreshkin18fc}.
In the case of FC100 \citep{oreshkin18fc}, a meta-test split is used.
Detailed evaluation protocols are outlined in \cref{sec:evaluation}.

\section{Evaluation Protocol}
\label{sec:evaluation}

\subsection{Transfer Learning via Linear Evaluation}

The linear evaluation protocol for transfer learning follows from \citet{kornblith19do} and \citet{lee21augself}.
Specifically, we divide the entire training dataset into a train set and a validation set to tune the regularization parameter by minimizing the L2-regularized cross-entropy loss using L-BFGS \citep{liu1989limited}. 
Train and validation set splits are shown in \cref{datasets}.
With the best parameter, we extract the frozen representations of 224 $\times$ 224 center-cropped images without data augmentation and train the linear classifier with the entire training dataset, including the validation set. 

\subsection{Few-Shot Classfication}

We adhere to the few-shot classification evaluation protocol outlined by \citet{lee21augself}.
Specifically, we conduct logistic regression using the frozen representations extracted from 224 $\times$ 224 images without data augmentation in an $N$-way $K$-shot episode.
It's important to note that as the encoder remains frozen, this protocol does not involve a fine-tuning approach.

\section{Additional Experiments}

We conduct additional experiments with \textsc{SupSiam}, varying the loss parameter $\alpha$, the number of positives, denoted as $M$ and the batch size.
During the experiments on batch size, we also incorporate contrastive learning, specifically \textsc{SupCon}.
Given that the performance recorded in the \cref{table:transfer} might be suboptimal due to pretraining with a batch size of 128, which could be too small, we re-pretrain \textsc{SupCon} using an increased batch size.
Unless specified otherwise, the remaining settings follow the setup outlined in \cref{sec:pretrain_app}.
We apply the evaluation methodology outlined in \cref{sec:evaluation} to the dataset introduced in \cref{sec:dataset}.

\subsection{Transfer Learning with Different $\alpha$}

We conduct experiments with various $\alpha$ values to explore the relationship between intra-class variance reduction and representation quality.
\Cref{table:alpha} presents the linear evaluation performances for different $\alpha$ values. 
The model performs best in most cases when $\alpha$ is set to 0.5.
Interestingly, the optimal $\alpha$ appears to vary depending on the downstream dataset.
Nevertheless, it is crucial to note that $\alpha$ should always fall within the range (0, 1) to effectively capture within-class diversity, thereby proving beneficial for downstream tasks.

\begin{table*}[h]
\caption{Transfer learning via linear evaluation results on various downstream datasets, where the model is \textsc{SupSiam} pretrained with different $\alpha$ on ImageNet-100.
\textit{Avg} represents the average performance across each dataset.
For each dataset, the \textbf{best results} are in bold and the \underline{second-best results} are underlined.
}
\label{table:alpha}
\begin{center}
\resizebox{\textwidth}{!}{%
\begin{tabular}{c|cccccccccccc}
\toprule
$\alpha$ & \textit{Avg} & CIFAR10 & CIFAR100 & DTD & Food & MIT67 & SUN397 & Caltech & CUB200 & Dogs & Flowers & Pets \\
\midrule
0.0 & 69.33 & 89.18 & 69.41 & 65.53 & 60.72 & 65.05 & 50.81 & 88.83 & 41.46 & 61.51 & 90.04 & 80.09 \\
0.2 & 70.14 & \underline{89.89} & \underline{70.56} & 65.89 & 61.03 & \underline{65.25} & 51.34 & \underline{88.85} & 42.07 & \underline{64.28} & 90.12 & \underline{82.27} \\
0.5 & \textbf{70.40} & \textbf{89.95} & \textbf{70.88} & \underline{66.51} & 61.46 & 64.45 & \underline{51.50} & \textbf{88.86} & \textbf{43.48} & \textbf{64.65} & \textbf{90.27} & \textbf{82.38} \\
0.8 & \underline{70.28} & 89.39 & 70.04 & \textbf{67.08} & \textbf{64.06} & \textbf{66.00} & \textbf{51.98} & 87.45 & \underline{42.16} & 62.94 & \underline{90.26} & 81.76 \\
1.0 & 67.07 & 87.28 & 66.41 & 66.06 & \underline{63.44} & 64.68 & 50.69 & 85.00 & 36.10 & 54.57 & 88.38 & 75.13 \\
\bottomrule
\end{tabular}%
}
\end{center}
\vspace{-10pt}
\end{table*}

\subsection{Ablation Study: Number of Positives from Target Pool}
\label{sec:numpos}
We conduct a study on $M$, which represents the number of positive samples from the target pool. 
As shown in \cref{table:numpos}, the model demonstrates robustness to the number of positives.
Even when sampling only one positive from the target pool, the performance is similar to sampling many positives.

\begin{table*}[h]
\caption{Transfer learning via linear evaluation results on various downstream datasets, where the model is \textsc{SupSiam}-pretrained with different $M$ on ImageNet-100.
\textit{all} stands for sampling all positives in the target pool.
\textit{Avg} represents the average performance across each dataset.
For each dataset, the \textbf{best results} are in bold and the \underline{second-best results} are underlined.
}
\label{table:numpos}
\begin{center}
\resizebox{\textwidth}{!}{%
\begin{tabular}{c|cccccccccccc}
\toprule
$M$ & \textit{Avg} & CIFAR10 & CIFAR100 & DTD & Food & MIT67 & SUN397 & Caltech & CUB200 & Dogs & Flowers & Pets \\
\midrule
1 & 70.13 & 89.58 & 70.59 & 65.75 & 61.39 & 64.85 & 51.41 & 88.57 & 42.80 & 64.52 & \underline{89.92} & 82.07 \\
4 & \textbf{70.40} & \underline{89.94} & 70.63 & \textbf{66.59} & \textbf{61.66} & \textbf{65.07} & 51.32 & \underline{88.82} & 42.88 & \underline{64.78} & 89.85 & \textbf{82.88} \\
16 & \underline{70.31} & \underline{89.94} & \underline{70.86} & 65.64 & 61.40 & \underline{64.95} & \textbf{51.54} & 88.65 & \underline{43.17} & \textbf{65.13} & 89.66 & \underline{82.42} \\
\textit{all} & \textbf{70.40} & \textbf{89.95} & \textbf{70.88} & \underline{66.51} & \underline{61.46} & 64.45 & \underline{51.50} & \textbf{88.86} & \textbf{43.48} & 64.65 & \textbf{90.27} & 82.38 \\
\bottomrule
\end{tabular}%
}
\end{center}
\vspace{-10pt}
\end{table*}

\subsection{Transfer Learning with Different Batch Size}
\label{sec:batchsize}

We pretrain \textsc{SupSiam} using an increased batch size 256.
Additionally, we pretrain \textsc{SupCon} with a batch size of 256, as the performance in \cref{table:transfer} pre-trained with a batch size of 128 might be suboptimal.
Moreover, to enhance the diversity of positive and negative samples, we also pretrain \textsc{SupCon} with an additional memory bank (target pool) of size 8192, as described in \citet{khosla20supcon}.
The learning rate scaled linearly \citep{goyal17accurate} with the batch size, \ie, for a batch size of 256, the learning rates are set to 0.3 for \textsc{SupCon} and 0.4 for \textsc{SupSiam}, respectively.

\begin{table*}[h]
\caption{Transfer learning via linear evaluation results on various downstream datasets, where the model is pretrained with different batch size on ImageNet-100.
\textit{Bsz} refers to the batch size during pretraining.
\textit{Avg} represents the average performance across each dataset.
The model marked with $*$ indicates the inclusion of a memory bank.
}
\label{table:bsz}
\begin{center}
\resizebox{\textwidth}{!}{%
\begin{tabular}{c|c|cccccccccccc}
\toprule
\textit{Bsz} & Model & \textit{Avg} & CIFAR10 & CIFAR100 & DTD & Food & MIT67 & SUN397 & Caltech & CUB200 & Dogs & Flowers & Pets \\
\midrule
\multirow{3}{*}{128} & \textsc{SupCon}  & 68.19 & 88.82 & 68.89 & 65.18 & 59.34 & 63.76 & 50.09 & 87.30 & 35.84 & 61.68 & 89.05 & 80.12 \\
 & \textsc{SupCon}$^*$ & 68.97 & 89.70 & 70.48 & 65.65 & 59.06 & 63.43 & 49.86 & 87.97 & 38.76 & 63.74 & 89.22 & 80.83 \\
 & \textsc{SupSiam} & 70.40 & 89.95 & 70.88 & 66.51 & 61.46 & 64.45 & 51.50 & 88.86 & 43.48 & 64.65 & 90.27 & 82.38 \\
\cmidrule(rl){1-14}
 \multirow{3}{*}{256} & \textsc{SupCon}  & 68.42 & 89.10 & 69.40 & 65.32 & 59.21 & 63.25 & 50.63 & 88.22 & 36.05 & 62.60 & 89.10 & 79.80 \\
 & \textsc{SupCon}$^*$ & 68.86 & 89.46 & 70.06 & 65.88 & 58.73 & 63.92 & 50.04 & 87.84 & 38.28 & 63.02 & 89.06 & 81.22 \\
 & \textsc{SupSiam} & 70.44 & 90.07 & 70.77 & 66.28 & 61.89 & 65.18 & 51.77 & 88.83 & 43.09 & 64.90 & 89.99 & 82.11 \\
\bottomrule
\end{tabular}%
}
\end{center}
\end{table*}

As shown in \cref{table:bsz}, supervised ANCL shows a slight improvement in performance when the batch size is increased to 256, though the gain is overall marginal, and it is not heavily influenced by batch size, similar to its self-supervised counterpart \citep{chen21siamese}.
In the case of \textsc{SupCon}, performance improves as the batch size increases, and memory bank provides performance gain, although this gain seems to be slightly reduced as the batch size increases.
However, it still shows lower performance compared to supervised ANCL, which performs well even with a smaller batch size.

\section{ViT Backbone}

To verify the independence of our proposed method from the encoder backbone, we conduct experiments using the ViT backbone \citep{dosovitskiy21transformer}. 
In contrastive learning, \textsc{MoCo-v3} \citep{chen21moco-v3} utilizes ViT as its backbone, and we benchmark this for implementing models such as \textsc{SupMoCo}, \textsc{BYOL}, and \textsc{SupBYOL}.
In \textsc{MoCo-v3}, unlike the previous \textsc{MoCo-v2} \citep{chen20improved}, the queue is removed and a predictor is added, resembling ANCL \citep{grill20bootstrap, chen21siamese}.
For \textsc{SupMoCo} with the ViT backbone, we also incorporate the predictor but retain a queue to ensure the existence of
features sharing the same label with a size of 8192.
Similarly, \textsc{SupBYOL} employs a target pool with a size of 8192.

We pretrain ViT-Small on ImageNet-100 \citep{deng2009imagenet,tian19cmc} for 200 epochs with a batch size of 256. 
Common parameter settings include using the AdamW optimizer \citep{loshcilov19adamw} with a linear learning rate warm-up for the first 40 epochs, a momentum of 0.9, and a weight decay of 0.1.
A cosine learning rate schedule \citep{loshchilov17sgdr} is applied to the encoder and projector.
We maintain a constant learning rate without decay for \textsc{BYOL} and \textsc{SupBYOL} following the prior work \citep{chen21siamese}, while we apply a cosine learning rate schedule to the predictor of \textsc{MoCo-v3} and \textsc{SupMoCo}.

\begin{itemize}
    \item \textbf{\textsc{MoCo-v3}} \citep{chen21moco-v3}.
    We follow the original parameter settings, where the learning rate is set to 1.5e-4 and the temperature parameter for contrastive loss is 0.2.
    The exponential moving average (EMA) parameter starts from 0.99 and is increased to one during training. 
    The projector consists of 3 MLP layers with an output dimension of 256 and a hidden dimension of 4096. 
    The predictor has 2 MLP layers with a hidden dimension of 4096.
    
    \item \textbf{\textsc{SupMoCo}} \citep{majumder21supmoco}. 
    The learning rate is set to 1.5e-3 and the temperature parameter for contrastive loss is 0.2.
    The EMA parameter starts from 0.99 and is increased to one during training.
    The projector consists of 3 MLP layers with an output dimension of 256 and a hidden dimension of 4096. 
    The predictor has 2 MLP layers with a hidden dimension of 4096.

    \item \textbf{\textsc{BYOL}} \citep{grill20bootstrap}. 
    The learning rate is set to 1.5e-3 and the EMA parameter starts from 0.996 and is increased to one during training.
    The projector consists of 2 MLP layers with an output dimension of 256 and a hidden dimension of 4096. 
    The predictor has 2 MLP layers with a hidden dimension of 4096.

    \item \textbf{\textsc{SupBYOL}}.
    The learning rate is set to 1.5e-3 and
    the EMA parameter starts from 0.996 and is increased to one during training. 
    The loss parameter $\alpha$ is set to 0.5 and all supervised target is obtained by sampling and averaging all positives in the target pool.
    The projector consists of 2 MLP layers with an output dimension of 256 and a hidden dimension of 4096. 
    The predictor has 2 MLP layers with a hidden dimension of 4096.
\end{itemize}

\begin{table*}[h]
\caption{Transfer learning via linear evaluation results on various downstream datasets, where the models trained with the ViT-Small backbone on ImageNet-100.
\textit{CL}, \textit{Sup}, \textit{EMA} stand for the cases when
negative samples are considered, labels are used for pretraining, and the momentum network is adopted, respectively.
\textit{Avg.Rank} represents the average performance ranking across all datasets.
For each dataset, the \textbf{best results} are in bold and the \underline{second-best results} are underlined.
Our proposed methods are marked with $\dagger$.
}
\label{table:vit}
\begin{center}
\resizebox{\textwidth}{!}{%
\begin{tabular}{l|ccc|cccccccccccc}
\toprule
Method & \textit{CL} & \textit{Sup} & \textit{EMA} &  \textit{Avg.Rank} & CIFAR10 & CIFAR100 & DTD & Food & MIT67 & SUN397 & Caltech & CUB200 & Dogs & Flowers & Pets \\
\midrule
\textsc{MoCo-v3} & \cmark &  & \cmark & 4.00 & 84.79 & 64.66 & 60.27 & 59.52 & 56.34 & 45.17 & 75.08 & 37.10 & 44.71 & 85.98 & 64.87 \\
\textsc{SupMoCo} & \cmark & \cmark & \cmark & \underline{2.18} & \underline{89.68} & \textbf{71.07} & 60.90 & 59.84 & 59.18 & 47.45 & \underline{83.13} & \textbf{47.69} & \underline{57.83} & \underline{89.35} & \underline{77.94} \\
\textsc{BYOL} & & & \cmark  & 2.36 & 87.61 & 66.48 & \textbf{65.48} & \textbf{63.36} & \underline{59.48} & \textbf{48.41} & 80.69 & 41.13 & 53.49 & 88.32 & 75.12 \\
\textsc{SupBYOL}$^\dagger$ & & \cmark & \cmark &  \textbf{1.45} & \textbf{90.29} & \underline{71.05} & \underline{62.87} & \underline{61.61} & \textbf{60.07} & \underline{48.36} & \textbf{84.37} & \underline{47.24} & \textbf{62.03} & \textbf{90.31} & \textbf{81.70} \\
\bottomrule
\end{tabular}%
}
\end{center}
\end{table*}

We observe a slightly lower performance in \cref{table:vit} compared to \cref{table:transfer}, where results are presented using ResNet-50 \citep{he2016deep} as the backbone.
This discrepancy is likely due to pretraining with ImageNet-100.
ViT typically requires more data for effective learning compared to ResNet, and the number of data samples in ImageNet-100 may be slightly insufficient.
Nevertheless, supervision in the ANCL framework with the ViT backbone proves effective in enhancing performance. 
Notably, when compared to supervised contrastive learning, proposed method exhibits slightly better performance across all datasets except one. 
This underscores the effectiveness of the supervised ANCL approach, which is applicable to the ViT backbone and remains independent of the underlying architecture.

\section{Pretraining on CIFAR}

We conduct additional experiments on the CIFAR \citep{krizhevsky2009learning} dataset, where the image size was reduced to 32$\times$32. 
The encoder employes a CIFAR variant of ResNet-18 \citep{he2016deep} and is trained for a total of 1000 epochs with a batch size of 256. 
For the ANCL approach, specifically \textsc{SimSiam} and \textsc{SupSiam}, we utilize a 2-layer MLP projector, and Gaussian blurring is excluded from the augmentation. 
For contrastive learning, we select \textsc{SimCLR} \citep{chen20simple} and its supervised version \textsc{SupCon} \citep{khosla20supcon}.
For ANCL, \textsc{SimSiam} \citep{chen21siamese} and \textsc{BYOL} \citep{grill20bootstrap} and their supervised counterparts \textsc{SupSiam} and \textsc{SupBYOL} are chosen as models.
Learning rates are tuned individually for each model: \textsc{SimCLR} (0.7), \textsc{SupCon} (0.6), \textsc{BYOL} (0.6), \textsc{SupBYOL} (0.5), \textsc{SimSiam} (0.7), and \textsc{SupSiam} (0.7).
For supervised ANCL, the target pool size is reduced to 4096, and the loss parameter $\alpha$ is set to 0.5 for \textsc{SupSiam} and 0.8 for \textsc{SupBYOL}.

\begin{table}[h]
\caption{Comparision of CL and ANCL with their self-supervised / supervised versions with ResNet-18 on CIFAR10 and 100. We run all experiments for 1000 epochs. If the pretext and downstream datasets are aligned, the supervised version shows improved performance. In contrast, when there is a mismatch, performance gains are observed only in the ANCL scenario.}
\label{table:cifar_result}
\begin{center}
\begin{tabular}{cccccccc}
\toprule
Pretext & Downstream & \textsc{SimCLR} & \textsc{SupCon} & 
\textsc{SimSiam} & \textsc{SupSiam} & \textsc{BYOL} & \textsc{SupBYOL} \\
\midrule
\multirow{2}{*}{CIFAR10} & CIFAR10 & 89.58 & 95.15  & 93.36 & 94.73 & 91.56 & 94.88 \\
& CIFAR100 & 56.36 & 53.82 & 60.51 & 61.76 & 50.20 & 55.47 \\
\cmidrule(rl){1-8}
\multirow{2}{*}{CIFAR100} & CIFAR10 & 80.36 & 79.99 & 78.81 & 85.19 & 78.35 & 80.09 \\
& CIFAR100 & 64.77 & 74.03 & 70.63 & 75.05 & 65.42 & 74.36 \\

\bottomrule
\end{tabular}
\end{center}
\end{table}

The results in \cref{table:cifar_result} indicate that when the pretext and downstream datasets are the same, the introduction of supervision leads to an increase in linear accuracy. 
Conversely, in cases where they differ, contrastive learning shows a decline or slight increase in performance. 
Asymmetric non-contrastive learning, on the other hand, benefits from labels, resulting in increased accuracy and showcasing the best performance.
Thus, our proposed supervised ANCL proves to be an effective method for obtaining high-quality representations across various datasets.

\end{document}